\newcommand{\mbf}[1]{\mathbf{#1}}
\newcommand{\mbb}[1]{\mathbb{#1}}
\newcommand{\ud}{\mathrm{d}}
\newcommand{\mcal}{\mathcal}
\newcommand{\norm}[1]{\left\lVert#1\right\rVert}
\newtheorem{theorem}{Theorem}
\newtheorem{proposition}{Proposition}
\newenvironment{customthm}[1]
{\innercustomthm}
{\endinnercustomthm}
\newenvironment{customprop}[1]
{\innercustomprop}
{\endinnercustomprop}
\newcommand{\be}{\begin{equation}}
\newcommand{\ee}{\end{equation}}
\definecolor{Gray}{gray}{0.85}
\definecolor{LightCyan}{rgb}{0.88,1,1}
\def\@onedot{\ifx\@let@token.\else.\null\fi\xspace}
\DeclareRobustCommand\onedot{\futurelet\@let@token\@onedot}
\newcommand{\figref}[1]{Figure~\ref{#1}}
\newcommand{\secref}[1]{Section~\ref{#1}}
\newcommand{\thmref}[1]{Theorem~\ref{#1}}
\newcommand{\propref}[1]{Proposition~\ref{#1}}
\def\eg{\emph{e.g}\onedot}
\def\ie{\emph{i.e}\onedot}
\def\etc{\emph{etc}\onedot}
\def\wrt{w.r.t\onedot}
\icmltitlerunning{Accelerating Natural Gradient with Higher-Order Invariance}
\begin{document}
\twocolumn[
\icmltitle{Accelerating Natural Gradient with \\ Higher-Order Invariance}
\begin{icmlauthorlist}
	\icmlauthor{Yang Song}{s}
	\icmlauthor{Jiaming Song}{s}
	\icmlauthor{Stefano Ermon}{s}
\end{icmlauthorlist}

\icmlaffiliation{s}{Computer Science Department, Stanford University}

\icmlcorrespondingauthor{Yang Song}{yangsong@cs.stanford.edu}
\icmlcorrespondingauthor{Jiaming Song}{tsong@cs.stanford.edu}
\icmlcorrespondingauthor{Stefano Ermon}{ermon@cs.stanford.edu}

\icmlkeywords{Natural Gradient, Invariance, Riemannian Geometry, Numerical Integrators}

\vskip 0.3in
]



\printAffiliationsAndNotice{}  

\begin{abstract}

An appealing property of the natural gradient is that it is invariant to arbitrary differentiable reparameterizations of the model. However, this invariance property requires infinitesimal steps and is lost in practical implementations with small but finite step sizes. In this paper, we study invariance properties from a combined perspective of Riemannian geometry and numerical differential equation solving. We define the order of invariance of a numerical method to be its convergence order to an invariant solution. We propose to use higher-order integrators and geodesic corrections to obtain more invariant optimization trajectories. We prove the numerical convergence properties of geodesic corrected updates and show that they can be as computational efficient as plain natural gradient. Experimentally, we demonstrate that invariance leads to faster optimization and our techniques improve on traditional natural gradient in deep neural network training and natural policy gradient for reinforcement learning.
\end{abstract}
\section{Introduction}

Non-convex optimization is a key component of the success of deep learning. Current state-of-the-art training methods are usually variants of stochastic gradient descent (SGD), such as AdaGrad~\citep{duchi2011adaptive}, RMSProp~\citep{hintonlecture} and Adam~\citep{kingma2014adam}.
While generally effective, performance of those first-order optimizers is highly dependent on the curvature of the optimization objective. When the Hessian matrix of the objective at the optimum has a large condition number, the problem is said to have pathological curvature~\citep{martens2010deep,sutskever2013importance}, and first-order methods will have trouble in making progress. The curvature, however, depends on how the model is parameterized. There may be some equivalent way of parameterizing the same model which has better-behaved curvature and is thus easier to optimize with first-order methods. Model reparameterizations, such as good network architectures~\citep{simonyan2014very, he2016deep} and normalization techniques \citep{lecun2012efficient,ioffe2015batch,salimans2016weight} are often critical for the success of first-order methods.

The natural gradient~\citep{amari1998natural} method
takes a different perspective to the same problem. Rather than devising a different parameterization for first-order optimizers, it tries to make the optimizer itself invariant to reparameterizations by directly operating on the manifold of probabilistic models. 
This invariance, however, only holds in the idealized case of infinitesimal steps, \ie, for continuous-time natural gradient descent trajectories on the manifold~\citep{ollivier2013riemannian,ollivier2015riemannian}. Practical implementations with small but finite step size (learning rate) are only approximately invariant. 
Inspired by Newton-Raphson method, the learning rate of natural gradient method is usually set to values near 1 in real applications~\citep{martens2010deep,martens2014new}, leading to potential loss of invariance. 

In this paper, we investigate invariance properties within the framework of Riemannian geometry and numerical differential equation solving. We observe that both the exact solution of the natural gradient dynamics and its approximation obtained with Riemannian Euler method~\citep{Bielecki2002-ix} are invariant. We propose to measure the invariance of a numerical scheme by \textit{studying its rate of convergence to those idealized truly invariant solutions}. It can be shown that the traditional natural gradient update (based on the forward Euler method) converges in first order. For improvement, we first propose to use a second-order Runge-Kutta integrator. Additionally, we introduce corrections based on the geodesic equation. We argue that the Runge-Kutta integrator converges to the exact solution in second order, and the method with geodesic corrections converges to the Riemannian Euler method in second order. Therefore, all the new methods have higher order of invariance, and experiments verify their faster convergence in deep neural network training and policy optimization for deep reinforcement learning. Moreover, the geodesic correction update has a faster variant which keeps the second-order invariance while being roughly as time efficient as the original natural gradient update. Our new methods can be used as drop-in replacements in any situation where natural gradient may be used.

\section{Preliminaries}
\subsection{Riemannian Geometry and Invariance}\label{sec:pre}

We use \emph{Einstein's summation convention} throughout this paper to simplify formulas. The convention states that when any index variable appears twice in a term, once as a superscript and once as a subscript, it indicates summation of the term over all possible values of the index variable. For example, $a^\mu b_\mu \triangleq \sum_{\mu=1}^n a^\mu b_\mu$ when index variable $\mu \in [n]$. 

Riemannian geometry is used to study intrinsic properties of differentiable manifolds equipped with metrics. The goal of this necessarily brief section is to introduce some key concepts related to the understanding of invariance. For more details, please refer to~\cite{petersen2006riemannian} and~\cite{amari1987differential}.

\begin{figure}
	\centering
		\includegraphics[width=0.95\columnwidth]{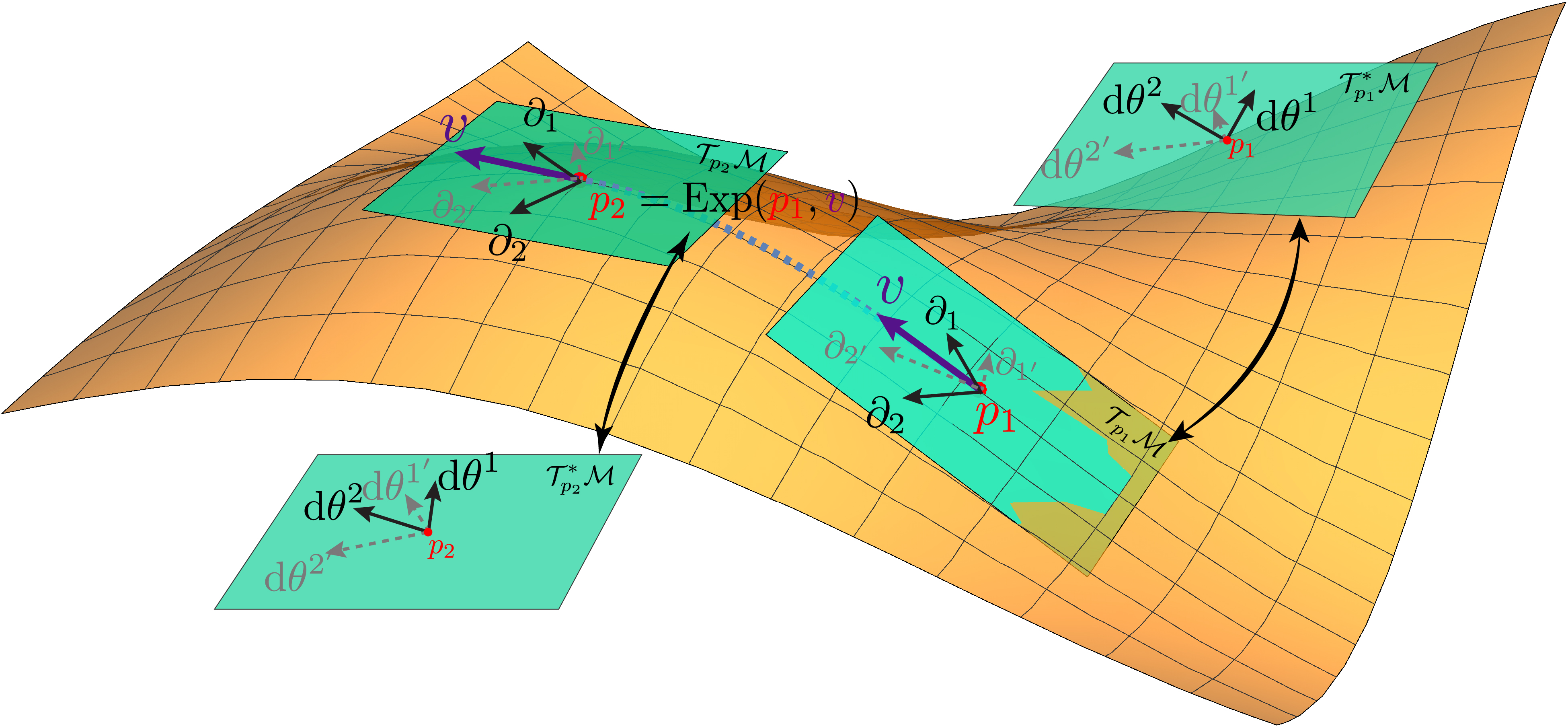}
	\caption{An illustration of Riemannian geometry concepts: tangent spaces, cotangent spaces, coordinate basis, dual coordinate basis, geodesics and the exponential map.}
	\label{fig:geometry}
\end{figure}

In this paper, we describe a family of probabilistic models as a manifold. Roughly speaking, a \emph{manifold} $\mcal{M}$ of dimension $n$ is a smooth space whose local regions resemble $\mbb{R}^n$~\citep{carroll2004spacetime}. Assume there exists a smooth mapping $\phi:\mcal{M}\rightarrow \mbb{R}^n$ in some neighborhood of $p$ and for any $p\in\mcal{M}$, $\phi(p)$ is the coordinate of $p$. As an example, if $p$ is a parameterized distribution, $\phi(p)$ will refer to its parameters. There is a linear space associated with each $p \in \mcal{M}$ called the \emph{tangent space} $\mcal{T}_p\cal{M}$. Each element $v \in \mcal{T}_p\cal{M}$ is called a \emph{vector}. For any tangent space $T_p \mcal{M}$, there exists a dual space $T_p^* \mcal{M}$ called the \emph{cotagent space}, which consists of all linear real-valued functions on the tangent space. Each element $v^*$ in the dual space $T_p^* \mcal{M}$ is called a \emph{covector}.  
Let $\phi(p) = (\theta^1,\theta^2,\cdots,\theta^n)$ be the coordinates of $p$, it can be shown that the set of operators $\{\frac{\partial}{\partial \theta^1},\cdots,\frac{\partial}{\partial \theta^n}\}$ forms a basis for $\mcal{T}_p\mcal{M}$ and is called the \emph{coordinate basis}. Similarly, the dual space admits the \emph{dual coordinate basis} $\{\ud\theta^1,\cdots,\ud\theta^n\}$. These two sets of bases satisfy $\ud\theta^\mu(\partial_\nu) = \delta^\mu_\nu$ where $\delta^{\mu}_\nu \triangleq \begin{cases}
1, \mu = \nu\\
0, \mu \neq \nu
\end{cases}$ is the Kronecker delta. Note that in this paper we abbreviate $\frac{\partial}{\partial \theta^\mu}$ to $\partial_\mu$ and often refer to an entity (\eg, vector, covector and point on the manifold) with its coordinates.

Vectors and covectors are geometric objects associated with a manifold, which exist independently of the coordinate system. However, we rely on their \textbf{representations} \wrt some coordinate system for quantitative studies. Given a coordinate system, a vector $\mbf{a}$ (covector $\mbf{a}^*$) can be represented by its coefficients \wrt the coordinate (dual coordinate) bases, which we denote as $a^\mu$ ($a_\mu$). Therefore, these coefficients depend on a specific coordinate system, and will change for different parameterizations. In order for those coefficients to represent coordinate-independent entities like vectors and covectors, their change should obey some appropriate transformation rules. Let the new coordinate system under a different parameterization be $\phi'(p) = (\xi^1,\cdots,\xi^n)$ and let the old one be $\phi(p) = (\theta^1,\cdots,\theta^n)$. It can be shown that the new coefficients of $\mbf{a} \in \mcal{T}_p\mcal{M}$ will be given by 
$
 a^{\mu'} = a^\mu \frac{\partial \xi^{\mu'}}{\partial \theta^\mu},
$
while the new coefficients of $\mbf{a}^* \in \mcal{T}_p^*\mcal{M}$ will be determined by 
$
a_{\mu'} = a_\mu \frac{\partial \theta^\mu}{\partial \xi^{\mu'}}.
$
Due to the difference of transformation rules, we say $a^\mu$ is \emph{contravariant} while $a_\mu$ is \emph{covariant}, as indicated by superscripts and subscripts respectively. In this paper, we only use Greek letters to denote contravariant / covariant components.

Riemannian manifolds are equipped with a positive definite metric tensor $g_p \in \mcal{T}_p^* \mcal{M} \otimes \mcal{T}_p^* \mcal{M}$, so that distances and angles can be characterized. The inner product of two vectors $\mbf{a} = a^\mu \partial_\mu \in \mcal{T}_p\mcal{M}$, $\mbf{b} = b^\nu \partial_\nu \in \mcal{T}_p\mcal{M}$ is defined as $\langle \mbf{a}, \mbf{b} \rangle \triangleq g_p(\mbf{a}, \mbf{b}) = g_{\mu\nu} \ud\theta^\mu \otimes \ud\theta^\nu (a^\mu \partial_\mu, b^\nu\partial_\nu) = g_{\mu\nu} a^\mu b^\nu$. For convenience, we denote the inverse of the metric tensor as $g^{\alpha\beta}$ using superscripts, \ie, $g^{\alpha\beta} g_{\beta\mu} = \delta^\alpha_\mu$. The introduction of inner product induces a natural map from a tangent space to its dual space. Let $\mbf{a} = a^\mu \partial_\mu \in \mcal{T}_p\mcal{M}$, its natural correspondence in $\mcal{T}_p^*\mcal{M}$ is the covector $\mbf{a}^* \triangleq \langle \mbf{a}, \cdot \rangle = a_\nu d\theta^\nu$. It can be shown that $a_\nu = a^\mu g_{\mu\nu}$ and $a^\mu = g^{\mu\nu}a_\nu$. We say the metric tensor relates the coefficients of a vector and its covector by lowering and raising indices, which effectively changes the transformation rule.

The metric structure makes it possible to define \emph{geodesics} on the manifold, which are constant speed curves $\gamma: \mbb{R} \rightarrow \mcal{M}$ that are locally distance minimizing. Since the distances on manifolds are independent of parameterization, geodesics are invariant objects. Using a specific coordinate system, $\gamma(t)$ can be determined by solving the \emph{geodesic equation}
\begin{align}
\ddot \gamma^\mu + \Gamma_{\alpha\beta}^\mu \dot\gamma^\alpha \dot \gamma^\beta = 0\label{eq:geo},
\end{align}
where $\Gamma_{\alpha\beta}^\mu$ is the \emph{Levi-Civita connection} defined by
\begin{align}
\Gamma_{\alpha\beta}^\mu \triangleq \frac{1}{2} g^{\mu\nu}(\partial_\alpha g_{\nu\beta} + \partial_\beta g_{\nu\alpha} - \partial_\nu g_{\alpha\beta}).\label{eqn:connection}
\end{align}
Note that we use $\dot{\gamma}$ to denote $\frac{\ud \gamma}{\ud t}$ and $\ddot{\gamma}$ for $\frac{\ud^2 \gamma}{\ud t^2}$.

Given $p\in\mcal{M}$ and $v \in \mcal{T}_p\mcal{M}$, there exists a unique geodesic satisfying $\gamma(0)=p, \dot{\gamma}(0) = v$. If we follow the curve $\gamma(t)$ from $p = \gamma(0)$ for a unit time $\Delta t = 1$, we can reach another point $p' = \gamma(1)$ on the manifold. In this way, traveling along geodesics defines a map from $\mcal{M} \times \mcal{T}\mcal{M}$ to $\mcal{M}$ called \emph{exponential map}
\begin{align}
\operatorname{Exp}(p,v) \triangleq \gamma(1),
\end{align}
where $\gamma(0) = p$ and $\dot{\gamma}(0) = v$. By simple re-scaling we also have $\operatorname{Exp}(p, h v) = \gamma(h)$.

As a summary, we provide a graphical illustration of relevant concepts in Riemannian geometry in~\figref{fig:geometry}. 
Here we emphasize again the important ontological difference between an object and its coordinate. The manifold itself, along with geodesics and vectors (covectors) in its tangent (cotangent) spaces is intrinsic and independent of coordinates. The coordinates need to be transformed correctly to describe the same objects and properties on the manifold under a different coordinate system. This is where invariance emerges.

\subsection{Numerical Differential Equation Solvers}
Let the ordinary differential equation (ODE) be
$\dot{x}(t) = f(t, x(t))$, where $x(0) = a$ and $t \in [0, T]$. Numerical integrators try to trace $x(t)$ with iterative local approximations $\{x_k \mid k\in\mbb{N}\}$.

We discuss several useful numerical methods in this paper. The forward Euler method updates its approximation by $x_{k+1} = x_k + h f(t_k, x_k)$ and $t_{k+1} = t_{k} + h$. It can be shown that as $h\rightarrow 0$, the error $\norm{x_k - x(t_k)}$ can be bounded by $\mcal{O}(h)$. The midpoint integrator is a Runge-Kutta method with $\mcal{O}(h^2)$ error. Its update formula is given by $x_{k+1} = x_k + h f\left( t_k + \frac{1}{2} h ,x_k + \frac{h}{2}f(t_k,x_k) \right)$, $t_{k+1} = t_{k} + h$. The Riemannian Euler method (see pp.3-6 in~\cite{Bielecki2002-ix}) is a less common variant of the Euler method, which uses the Exponential map for its updates as $x_{k+1} = \operatorname{Exp}(x_k,h f(t_k, x_k))$, $t_{k+1} = t_{k} + h$. While having the same asymptotic error $\mcal{O}(h)$ as forward Euler, it has more desirable invariance properties.

\subsection{Revisiting Natural Gradient Method}
Let $r_\theta(\mbf{x}, \mbf{t}) = p_\theta(\mbf{t}\mid\mbf{x})q(\mbf{x})$ denote a probabilistic model parameterized by $\theta \in \Theta$, where $\mbf{x},\mbf{t}$ are random variables, $q(\mbf{x})$ is the marginal distribution of $\mbf{x}$ and assumed to be fixed. Conventionally, $\mbf{x}$ is used to denote the input and $\mbf{t}$ represents its label. In a differential geometric framework, the set of all possible probabilistic models $r_\theta$ constitutes a manifold $\mcal{M}$, and the parameter vector $\theta$ provides a coordinate system. Furthermore, the infinitesimal distance of probabilistic models can be measured by the Fisher information metric $g_{\mu\nu} = \mbb{E}_{\mbf{x}\sim q} \mbb{E}_{p_\theta(\mbf{t}\mid\mbf{x})}[\partial_\mu\log p_\theta(\mbf{t}\mid\mbf{x})\partial_\nu \log p_\theta(\mbf{t}\mid\mbf{x})]$. Let the loss function $L(r_\theta) = - \mbb{E}_{\mbf{x}\sim q} [\log p_\theta(\mbf{l}\mid\mbf{x})]$ be the expected negative log-likelihood, where $\mbf{l}$ denotes the ground truth labels in the training dataset. Our learning goal is to find a model $r_{\theta^*}$ that minimizes the (empirical) loss $L(r_\theta)$.

The well known update rule of gradient descent $\theta_{k+1}^\mu = \theta_k^\mu - h \lambda \partial_\mu L(r_{\theta_k})$ can be viewed as approximately solving the (continuous time) ODE
\begin{equation}
\dot\theta^\mu = -\lambda \partial_\mu L(r_\theta) \label{eqn:sgd}
\end{equation}
with forward Euler method. Here $\lambda$ is a time scale constant, $h$ is the step size, and their product $h\lambda$ is the learning rate. Note that $\lambda$ will only affect the ``speed'' but not trajectory of the system. It is notorious that the gradient descent ODE is not invariant to reparameterizations~\citep{ollivier2013riemannian,martens2010deep}. For example, if we rescale $\theta^\mu$ to $2 \theta^\mu$, $\partial_\mu L(r_\theta)$ will be downscaled to $\frac{1}{2} \partial_\mu L(r_\theta)$. This is more evident from a differential geometric point of view. As can be verified by chain rule, $\dot\theta^\mu$ transforms contravariantly and can therefore be treated as a vector in $\mcal{T}_p\mcal{M}$, while $\partial_\mu L(r_\theta)$ transforms covariantly, thus being a covector in $\mcal{T}^*_p\mcal{M}$. Because Eq.~\eqref{eqn:sgd} tries to relate objects in different spaces with different transformation rules, it is not an invariant relation.

Natural gradient alleviates this issue by approximately solving an invariant ODE. Recall that we can raise or lower an index given a metric tensor $g_{\mu\nu}$. By raising the index of $\partial_\mu L(r_\theta)$, the r.h.s. of the gradient descent ODE (Eq.~\eqref{eqn:sgd}) becomes a vector in $\mcal{T}_p\mcal{M}$, which solves the type mismatch problem of Eq.~\eqref{eqn:sgd}. The new ODE
\begin{equation}
\dot\theta^\mu = - \lambda g^{\mu\nu} \partial_\nu L(r_\theta) \label{eqn:ng}
\end{equation}
is now invariant, and the forward Euler approximation becomes $\theta_{k+1}^\mu = \theta_k^\mu - h \lambda g^{\mu\nu}\partial_\nu L(r_{\theta_k})$, which is the traditional natural gradient update~\citep{amari1998natural}.

\section{Higher-order Integrators}


If we could integrate the learning trajectory equation $\dot \theta^\mu = - \lambda g^{\mu\nu}\partial_\nu L$ exactly, the optimization procedure would be invariant to reparameterizations. 
However, the na\"{i}ve linear update of natural gradient $\theta_{k+1}^\mu = \theta_k^\mu - h\lambda g^{\mu\nu}\partial_\nu L$ is only a forward Euler approximation, 
and can only converge to the invariant exact solution in first order.
Therefore, a natural improvement is to use higher-order integrators to obtain a more accurate approximation to the exact solution.

As mentioned before, the midpoint integrator has second-order convergence and should be generally more accurate. In our case, it becomes
\begin{align*}
\theta_{k+\frac{1}{2}}^\mu &= \theta_k^\mu
 - \frac{1}{2}h\lambda g^{\mu\nu}(\theta_k)\partial_\nu L(r_{\theta_k}),\\
\theta_{k+1}^\mu &= \theta_k^\mu - h\lambda g^{\mu\nu}(\theta_{k+\frac{1}{2}}) \partial_\nu L(r_{\theta_{k+\frac{1}{2}}}).
\end{align*}
where $g^{\mu\nu}(\theta_k), g^{\mu\nu}(\theta_{k+\frac{1}{2}})$ are the inverse metrics evaluated at $\theta_k$ and $\theta_{k+\frac{1}{2}}$ respectively. Since our midpoint integrator converges to the invariant natural gradient ODE solution in second order, it preserves higher-order invariance compared to the first-order Euler integrator used in vanilla natural gradient.




\section{Riemannian Euler Method}
For solving the natural gradient ODE (Eq.~\eqref{eqn:ng}), the Riemannian Euler method's update rule becomes
\begin{align}
	\theta_{k+1}^\mu = \operatorname{Exp}(\theta_{k}^\mu, -h\lambda g^{\mu\nu}\partial_\nu L(r_{\theta_k})), \label{eq:reuler}
\end{align}
where $\operatorname{Exp}:\{ (p,v)\mid p\in \mcal{M}, v\in \mcal{T}_p\mcal{M} \}\rightarrow \mcal{M}$ is the exponential map as defined in \secref{sec:pre}. The solution obtained by Riemannian Euler method is invariant to reparameterizations, because $\operatorname{Exp}$ is a function independent of parameterization and for each step, the two arguments of $\operatorname{Exp}$ are both invariant. 


\subsection{Geodesic Correction}

For most models, it is not tractable to compute $\operatorname{Exp}$, since it requires solving the geodesic equation \eqref{eq:geo} exactly. Nonetheless, there are two numerical methods to approximate geodesics, with different levels of accuracy. 

According to \secref{sec:pre}, 
$\operatorname{Exp}(\theta_k^\mu, -h \lambda  g^{\mu\nu}\partial_\nu L(r_{\theta_k})) = \gamma^\mu_k(h)$, where $\gamma^\mu_k$ satisfies the geodesic equation \eqref{eq:geo} and
\begin{align*}
\gamma^\mu_k(0) &= \theta_k^\mu \\
\dot\gamma^\mu_k(0) &= - \lambda g^{\mu\nu} \partial_\nu L(r_{\theta_k}).
\end{align*}
The first method for approximately solving $\gamma^\mu_k(t)$ ignores the whole geodesic equation and only uses information of first derivatives, giving $$\gamma_k^\mu(h) \approx \theta_{k}^\mu + h \dot{\gamma}^\mu_k(0) = \theta_k^\mu - h \lambda g^{\mu\nu}\partial_\nu L,$$which corresponds to the na\"{i}ve natural gradient update rule. 

The more accurate method leverages information of second derivatives from the geodesic equation~\eqref{eq:geo}. The result is 
\begin{align*}
\gamma^\mu_k(h) &\approx \theta^\mu_k + h \dot{\gamma}_k^\mu(0) + \frac{1}{2}h^2 \ddot{\gamma}^\mu_k(0)\\
 &= \theta^\mu_k - h \lambda g^{\mu\nu}\partial_\nu L - \frac{1}{2}h^2 \Gamma^{\mu}_{\alpha\beta} \dot{\gamma}_k^\alpha(0) \dot{\gamma}_k^\beta(0).
\end{align*}
The additional second-order term given by the geodesic equation (\ref{eq:geo})  reduces the truncation error to third-order. This corresponds to our new natural gradient update rule with \emph{geodesic correction}, \ie, 
\begin{align}
\theta_{k+1}^\mu = \theta_{k}^\mu + h \dot{\gamma}^\mu_k(0) - \frac{1}{2}h^2 \Gamma^\mu_{\alpha\beta} \dot{\gamma}^\alpha_k(0) \dot{\gamma}^\beta_k(0),\label{main:geo:eq}
\end{align}
where $\dot{\gamma}^\mu_k(0) = -\lambda g^{\mu\nu}\partial_\nu L(r_{\theta_k})$.

\subsection{Faster Geodesic Correction}

To obtain the second order term in the geodesic corrected update, we first need to compute $\dot{\gamma}_k(0)$, which requires inverting the Fisher information matrix. Then we have to plug in $\dot{\gamma}_k(0)$ and compute $\Gamma^\mu_{\alpha\beta}\dot{\gamma}_k^\alpha(0) \dot{\gamma}_k^\beta(0)$, which involves inverting the same Fisher information matrix again (see \eqref{eqn:connection}). Matrix inversion (more precisely, solving the corresponding linear system) is expensive and it would be beneficial to combine the natural gradient and geodesic correction terms together and do only one inversion.

To this end, we propose to estimate $\dot\gamma_k(0)$ in $\Gamma_{\alpha\beta}^\mu \dot{\gamma}_k(0)^\alpha \dot{\gamma}_k(0)^\beta$ with $\dot\gamma_k(0) \approx (\theta_k - \theta_{k - 1})/h$. 
Using this approximation and substituting (\ref{eqn:connection}) into (\ref{main:geo:eq}) gives the following faster geodesic correction update rule:
\begin{align}
\delta\theta_{k}^\mu &= \lambda g^{\mu\nu}\cdot
\bigg[ -\partial_\nu L(r_{\theta_k}) - 
\\ &\frac{1}{4} h\lambda (\partial_\alpha g_{\nu\beta} + \partial_\beta g_{\nu\alpha} - \partial_\nu g_{\alpha\beta})\delta\theta_{k-1}^\alpha\delta\theta_{k-1}^\beta\bigg] \nonumber\\
\theta_{k+1}^\mu &= \theta_{k}^\mu +h\delta\theta_{k}^\mu,\label{main:fastergeo:eq}
\end{align}
which only involves one inversion of the Fisher information matrix. 

\subsection{Convergence Theorem}
We summarize the convergence properties of geodesic correction and its faster variant in the following general theorem.
\begin{theorem}[Informal]\label{thm:converge}
	Consider the initial value problem $\dot x = f(t, x(t)), x(0) = a, 0\leq t \leq T$. Let the interval $[0, T]$ be subdivided into $n$ equal parts by the grid points $0 = t_0 < t_1 < \cdots < t_n = T$, with the grid size $h = T / n$. Denote $x_k$ and $\hat{x}_k$ as the numerical solution given by geodesic correction and its faster version respectively.	Define the error $e_k$ at each grid point $x_k$ by $e_k = x_k' - x_k$, and $\hat{e}_k = x_k' - \hat{x}_k$, where $x_k'$ is the numerical solution given by Riemannian Euler method. Then it follows that
	\begin{equation*}
	\norm{e_k} \leq \mcal{O}(h^2)\quad \text{and} \quad	\norm{\hat{e}_k} \leq \mcal{O}(h^2),  h \rightarrow 0,  \forall k \in [n].
	\end{equation*}
	As a corollary, both Euler's update with geodesic correction and its faster variant converge to the solution of ODE in 1st order.
\end{theorem}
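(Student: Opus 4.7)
The plan is to apply the standard paradigm for convergence of one-step integrators: bound the local truncation error (LTE) per step, establish Lipschitz stability of the reference map (here, Riemannian Euler), and accumulate via a discrete Gronwall inequality. I will work under the implicit regularity assumption that $f$, the metric $g_{\mu\nu}$, and their derivatives are smooth and bounded on a compact neighborhood of the trajectory, so that the geodesics emanating from each $x_k$ exist on $[0,h]$ and their first few derivatives can be uniformly controlled.

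First I would identify the exact one-step map of Riemannian Euler with a geodesic evaluation: one step from $x_k$ gives $x'_{k+1} = \operatorname{Exp}(x_k, h f(t_k, x_k)) = \gamma_k(h)$, where $\gamma_k$ solves the geodesic equation~\eqref{eq:geo} with $\gamma_k(0) = x_k$ and $\dot\gamma_k(0) = f(t_k, x_k)$. The geodesic correction update~\eqref{main:geo:eq} is, by construction, the second-order Taylor polynomial of $\gamma_k$ at $0$, so Taylor's theorem gives a per-step discrepancy $\|\gamma_k(h) - \gamma_k(0) - h\dot\gamma_k(0) - \tfrac{h^2}{2}\ddot\gamma_k(0)\| \le \tfrac{h^3}{6}\sup_{s\in[0,h]}\|\gamma_k'''(s)\|$, which is $O(h^3)$ under the regularity assumption (note $\gamma_k'''$ can be written in terms of Christoffel symbols and their first derivatives by differentiating~\eqref{eq:geo}).

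Next I would record stability of the Riemannian Euler map: for close starts $y, z$ in a compact neighborhood of the trajectory, $\|\operatorname{Exp}(y, h f(t_k, y)) - \operatorname{Exp}(z, h f(t_k, z))\| \le (1 + Lh)\|y - z\|$ for a constant $L$ controlled by the Lipschitz constants of $f$ and of $\operatorname{Exp}$. Combining this with the $O(h^3)$ LTE via the triangle decomposition $\|e_{k+1}\| \le (1 + Lh)\|e_k\| + Ch^3$ and a discrete Gronwall argument yields $\|e_k\| \le \tfrac{Ch^2}{L}(e^{LT}-1) = O(h^2)$, which gives the first half of the claim.

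For the faster variant I would show that substituting $\delta\theta_{k-1}$ for $\dot\gamma_k(0)$ inside $\Gamma^\mu_{\alpha\beta}\dot\gamma^\alpha\dot\gamma^\beta$ does not spoil the LTE order. Under an induction hypothesis $\|\hat e_{k-1}\| = O(h^2)$ and the fact that $\dot\gamma_{k-1}(0) - \dot\gamma_k(0) = O(h)$ along the trajectory, one obtains $\delta\theta_{k-1} - \dot\gamma_k(0) = O(h)$; since this perturbation enters only the $h^2$-term, the additional contribution to the LTE is $O(h^3)$, preserving the overall estimate. The main obstacle is that the faster update is formally a two-step recursion, so the accumulation step must jointly track $(\hat\theta_k, \delta\theta_{k-1})$; I expect to address this with a slightly enlarged Gronwall argument in which the velocity increment itself satisfies a one-step Lipschitz recursion, together with a careful initialization of $\delta\theta_0$ so that no $O(1)$ seed error propagates. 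The corollary then follows from the triangle inequality $\|x_k - x(t_k)\| \le \|e_k\| + \|x_k' - x(t_k)\|$ and the known first-order convergence of Riemannian Euler to the exact ODE solution.
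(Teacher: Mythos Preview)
Your plan is correct and follows essentially the same paradigm as the paper's proof: a Taylor expansion of $\gamma_k(h)=\operatorname{Exp}(x_k,hf(t_k,x_k))$ to identify an $O(h^3)$ local truncation error for the geodesic-corrected step, Lipschitz stability of the one-step map, and a discrete Gronwall accumulation to reach $O(h^2)$. The paper organizes the Lipschitz constants by viewing $\gamma(p,v,s)$ as a smooth map on a compact set and bounding $\sup\|\partial_p\partial_s\gamma\|$, $\sup\|\partial_v\partial_s\gamma\|$, etc., but this is exactly the regularity you are invoking.

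The one place where the paper's route is slightly different---and cleaner than what you sketch---is the faster variant. Rather than treating $\hat x_k$ as a two-step recursion and running an ``enlarged'' Gronwall on the pair $(\hat x_k,\delta\theta_{k-1})$, the paper compares $\hat x_k$ directly to the (one-step) geodesic-corrected iterate $x_k$: expanding $\delta\hat x_k=(\hat x_k-\hat x_{k-1})/h$ inside $\Gamma^\mu_{\alpha\beta}\delta\hat x_k^\alpha\delta\hat x_k^\beta$ shows that $\hat x_{k+1}$ agrees with the geodesic-corrected form up to explicit $O(h^3)$ and $O(h^4)$ remainders, so $\theta_k:=x_k-\hat x_k$ satisfies the same type of one-step Gronwall recursion as $e_k$, giving $\|\theta_k\|=O(h^2)$, and then $\|\hat e_k\|\le\|e_k\|+\|\theta_k\|$. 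This avoids the need for any induction on $\hat e_{k-1}$ or joint tracking of the velocity increment; your approach would also go through, but the paper's decomposition is simpler to execute.
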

\begin{proof}
	Please refer to Appendix~\ref{app:proofs} for a rigorous statement and detailed proof.
\end{proof}

The statement of \thmref{thm:converge} is general enough to hold beyond the natural gradient ODE (Eq.~\eqref{eqn:ng}). It shows that both geodesic correction and its faster variant converge to the invariant Riemannian Euler method in 2nd order. In contrast, vanilla forward Euler method, as used in traditional natural gradient, is a first order approximation of Riemannian Euler method. In this sense, geodesic corrected updates preserve higher-order invariance.

\section{Geodesic Correction for Neural Networks}
Adding geodesic correction requires computing the Levi-Civita connection $\Gamma_{\mu\nu}^\alpha$ (see \eqref{eqn:connection}), which usually involves second-order derivatives. This is to the contrast of natural gradient, where the computation of Fisher information matrix only involves first-order derivatives of outputs. In this section, we address the computational issues of geodesic correction in optimizing deep neural networks.


In order to use natural gradient for neural network training, we first need to convert neural networks to probabilistic models. A feed-forward network can be treated as a conditional distribution $p_\theta(\mbf{t}\mid \mbf{x})$. For regression networks, $p_\theta(\mbf{t}\mid \mbf{x})$ is usually a family of multivariate Gaussians. For classification networks, $p_\theta(\mbf{t}\mid\mbf{x})$ usually becomes a family of categorical distributions. The joint probability density is $q(\mbf{x})p_\theta(\mbf{t}\mid\mbf{x})$, where $q(\mbf{x})$ is the data distribution and is usually approximated with the empirical distribution.

The first result in this section is the analytical formula of the Levi-Civita connection of neural networks.

\begin{proposition}
\label{analytic-levi}
	The Levi-Civita connection of a neural network model manifold is given by
	\begin{multline}
	\Gamma^\mu_{\alpha\beta} = g^{\mu\nu} \mbb{E}_{q(\mbf{x})} \mbb{E}_{p_\theta(\mbf{t}\mid\mbf{x})} \\\bigg\{ \partial_\nu \log p_\theta(\mbf{t}\mid\mbf{x})
	\bigg[ \partial_\alpha \partial_\beta \log p_\theta(\mbf{t}\mid\mbf{x}) +\\ \frac{1}{2}\partial_\alpha \log p_\theta(\mbf{t}\mid\mbf{x}) \partial_\beta \log p_\theta(\mbf{t}\mid\mbf{x})\bigg]\bigg\}
	\end{multline}
\end{proposition}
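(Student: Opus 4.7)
The plan is to reduce the claim to a direct computation: expand the Levi-Civita connection from its defining formula \eqref{eqn:connection}, differentiate the Fisher information metric under the integral, and then watch three pairs of terms collapse by symmetry.

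First, I would introduce the shorthand $\ell := \log p_\theta(\mathbf{t}\mid\mathbf{x})$ and recall the log-derivative identity $\partial_\alpha p_\theta = p_\theta\,\partial_\alpha \ell$. Starting from $g_{\nu\beta} = \mathbb{E}_{q}\mathbb{E}_{p_\theta}[\partial_\nu \ell\,\partial_\beta \ell]$, I would differentiate under the expectations (the marginal $q(\mathbf{x})$ does not depend on $\theta$, and the $p_\theta$ derivative is absorbed via the log-derivative trick) to obtain
\begin{align*}
\partial_\alpha g_{\nu\beta} = \mathbb{E}_q\mathbb{E}_{p_\theta}\bigl[ \partial_\alpha\ell\,\partial_\nu\ell\,\partial_\beta\ell + \partial_\alpha\partial_\nu\ell\,\partial_\beta\ell + \partial_\nu\ell\,\partial_\alpha\partial_\beta\ell\bigr],
\end{align*}
and write analogous expressions for $\partial_\beta g_{\nu\alpha}$ and $\partial_\nu g_{\alpha\beta}$ by cyclically permuting the indices.

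The key step is then to assemble $\partial_\alpha g_{\nu\beta} + \partial_\beta g_{\nu\alpha} - \partial_\nu g_{\alpha\beta}$ and identify the cancellations. The three cubic terms $\partial_\alpha\ell\,\partial_\nu\ell\,\partial_\beta\ell$ appear with signs $+,+,-$ and collapse to a single copy. Among the six mixed ``Hessian times gradient'' terms, the symmetry of partial derivatives $\partial_\alpha\partial_\nu\ell = \partial_\nu\partial_\alpha\ell$ makes two pairs cancel exactly, while the remaining two pieces both equal $\partial_\nu\ell\,\partial_\alpha\partial_\beta\ell$ and thus combine into a factor of $2$. This yields
\begin{align*}
\partial_\alpha g_{\nu\beta} + \partial_\beta g_{\nu\alpha} - \partial_\nu g_{\alpha\beta} = \mathbb{E}_q\mathbb{E}_{p_\theta}\bigl[2\partial_\nu\ell\,\partial_\alpha\partial_\beta\ell + \partial_\nu\ell\,\partial_\alpha\ell\,\partial_\beta\ell\bigr].
\end{align*}
Multiplying by $\tfrac{1}{2}g^{\mu\nu}$ and contracting in $\nu$ reproduces the claimed formula.

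I do not expect genuine obstacles here; the proof is a bookkeeping exercise once the log-derivative trick is applied. The only mildly delicate point is justifying differentiation under the expectation (swapping $\partial_\alpha$ with $\mathbb{E}_{p_\theta}$), which is standard under the usual regularity assumptions on $p_\theta$ in information geometry and is implicit in the very definition of the Fisher metric used earlier in the paper. I would state this assumption once and then proceed with the symbolic calculation as above.
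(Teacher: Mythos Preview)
Your proposal is correct and follows essentially the same route as the paper: differentiate the Fisher metric via the score-function (log-derivative) trick to obtain the three-term expression for $\partial_\alpha g_{\nu\beta}$, then plug into the Levi-Civita formula \eqref{eqn:connection} and let the symmetric terms cancel. The only cosmetic difference is that the paper works with the joint $r_\theta(\mathbf{z})=q(\mathbf{x})p_\theta(\mathbf{t}\mid\mathbf{x})$ and substitutes back at the end, whereas you work with $p_\theta(\mathbf{t}\mid\mathbf{x})$ directly; since $q$ is $\theta$-independent these are identical computations.
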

\begin{proof}
In Appendix~\ref{app:proofs}.
\end{proof}
	%

We denote the outputs of a neural network as $\mbf{y}(\mbf{x}, \theta) = (y_1,y_2,\cdots,y_o)$, which is an $o$-dimensional vector if there are $o$ output units. In this paper, we assume that $\mbf{y}(\mbf{x},\theta)$ are the values \emph{after} final layer activation (\eg, softmax). 
For typical loss functions, the expectation with respect to the corresponding distributions can be calculated analytically. Specifically, we instantiate the Levi-Civita connection for model distributions induced by three common losses and summarize them in the following proposition.

\begin{proposition}\label{prop:l2}
For the squared loss, we have
\begin{align*}
p_\theta(\mbf{t}\mid\mbf{x}) &= \prod_{i=1}^o \mcal{N}(t_i \mid y_i, \sigma^2)\\
g_{\mu\nu} &= \frac{1}{\sigma^2} \sum_{i=1}^o \mbb{E}_{q(\mbf{x})}[\partial_\mu y_i \partial_\nu y_i]\\
\Gamma^\mu_{\alpha\beta} &= \frac{1}{\sigma^2} \sum_{i=1}^o g^{\mu\nu} \mbb{E}_{q(\mbf{x})} [\partial_\nu y_i \partial_\alpha \partial_\beta y_i]
\end{align*}
For the binary cross-entropy loss, we have 
\begin{align*}
p_\theta(\mbf{t}\mid \mbf{x}) &= \prod_{i=1}^o y_i^{t_i} (1-y_i)^{1-t_i}\\
g_{\mu\nu} &= \sum_{i=1}^{o} ~\mbb{E}_{q(\mbf{x})} \bigg[~\frac{1}{y_i(1-y_i)}\cdot\partial_\mu y_i \partial_\nu y_i~\bigg]\\
\Gamma^\mu_{\alpha\beta} &= g^{\mu\nu} \sum_{i=1}^o~ \mbb{E}_{q(\mbf{x})}\bigg[ ~\frac{2y_i-1}{2y_i^2(1-y_i)^2} \cdot \partial_\nu y_i \partial_\alpha y_i \partial_\beta y_i \\
&\qquad\qquad\qquad+ \frac{1}{y_i(1-y_i)}\cdot \partial_\nu y_i\partial_\alpha\partial_\beta y_i ~\bigg].
\end{align*}
In the case of multi-class cross-entropy loss, we have
\begin{align*}
p_\theta(\mbf{t}\mid\mbf{x}) &= \prod_{i=1}^o y_i^{t_i}\\
g_{\mu\nu} &= \frac{1}{\sigma^2} \sum_{i=1}^o~ \mbb{E}_{q(\mbf{x})}\bigg[\frac{1}{y_i}\cdot \partial_\mu y_i \partial_\nu y_i\bigg]\\
\Gamma^\mu_{\alpha\beta} &= g^{\mu\nu} \sum_{i=1}^o ~\mbb{E}_{q(\mbf{x})}\bigg[ \frac{1}{y_i} \cdot \partial_\nu y_i \partial_\alpha \partial_\beta y_i \\
&\qquad\qquad\qquad- \frac{1}{2y_i^2}\cdot \partial_\nu y_i \partial_\alpha y_i \partial_\beta y_i \bigg].
\end{align*}
\end{proposition}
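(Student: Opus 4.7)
The plan is to specialize Proposition~\ref{analytic-levi} to each of the three loss-induced families by computing, via the chain rule through the network outputs $\mbf{y}(\mbf{x},\theta)$, the derivatives $\partial_\mu\log p_\theta(\mbf{t}\mid\mbf{x})$ and $\partial_\alpha\partial_\beta \log p_\theta(\mbf{t}\mid\mbf{x})$, and then evaluating the inner expectation over $\mbf{t}\mid\mbf{x}$ using the moment structure of the model distribution. The Fisher metric only needs a second moment of the score; the expression for $\Gamma^\mu_{\alpha\beta}$ additionally requires third central moments of $\mbf{t}\mid\mbf{x}$.

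In all three cases I would write the score in the form $\partial_\mu\log p = \sum_i a_i(\mbf{t},\mbf{y})\,\partial_\mu y_i$, with $a_i=(t_i-y_i)/\sigma^2$ for the Gaussian, $a_i=(t_i-y_i)/(y_i(1-y_i))$ for the Bernoulli product, and $a_i=t_i/y_i$ for the categorical, read off directly from $\log p$. Differentiating once more gives $\partial_\alpha\partial_\beta\log p = \sum_i a_i\,\partial_\alpha\partial_\beta y_i + \sum_{i,j}(\partial_{y_j}a_i)\,\partial_\alpha y_i\,\partial_\beta y_j$. Substituting these into the formula of Proposition~\ref{analytic-levi} reduces the whole problem to evaluating $\mathbb{E}_p[a_i a_j]$, $\mathbb{E}_p[a_i a_j a_k]$, and $\mathbb{E}_p[a_i\,\partial_{y_k}a_j]$, and then collecting terms.

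The Gaussian case is cleanest: $\mathbb{E}_p[a_i a_j]=\delta_{ij}/\sigma^2$ immediately gives the stated $g_{\mu\nu}$, and because all odd central moments of a Gaussian vanish both the cubic score term and the mixed term containing an extra factor $(t_i-y_i)$ drop out, leaving only $\frac{1}{\sigma^2}\sum_i\mathbb{E}_q[\partial_\nu y_i\,\partial_\alpha\partial_\beta y_i]$. For the Bernoulli product, independence across output units still forces all $i\neq j$ cross-index contributions to vanish, and one substitutes the Bernoulli second and third central moments $y_i(1-y_i)$ and $y_i(1-y_i)(1-2y_i)$; after dividing by the powers of $y_i(1-y_i)$ produced by the $a_i$'s, the surviving pieces combine to give the $(2y_i-1)/(2y_i^2(1-y_i)^2)$ coefficient on the cubic term and the $1/(y_i(1-y_i))$ coefficient on the second-derivative term. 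For the categorical case the outputs are not independent, but the much stronger identity $t_i t_j = t_i\,\delta_{ij}$ (since at most one $t_i$ equals $1$) collapses every multi-index sum via $\mathbb{E}[t_i t_j t_k]=y_i\delta_{ij}\delta_{jk}$, so that both $g_{\mu\nu}$ and $\Gamma^\mu_{\alpha\beta}$ reduce to single-index sums; combining the $\partial_\alpha\partial_\beta\log p$ and $\tfrac{1}{2}$-cubic pieces with their signs yields the stated $\tfrac{1}{y_i}\partial_\nu y_i\,\partial_\alpha\partial_\beta y_i - \tfrac{1}{2y_i^2}\partial_\nu y_i\,\partial_\alpha y_i\,\partial_\beta y_i$ expression.

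The only real obstacle is algebraic bookkeeping: for each distribution the product rule produces several terms, and one must carefully track which index combinations survive after averaging, especially to recognize that the Gaussian third-moment cancellation, the Bernoulli independence across outputs, and the categorical identity $t_i t_j=t_i\delta_{ij}$ are precisely what make the answers collapse to a single-index sum. No new analytic input is required beyond a correct computation of the first three (central) moments of each conditional distribution, so the proof is essentially a direct verification.
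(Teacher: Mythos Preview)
Your proposal is correct and follows essentially the same approach as the paper: both specialize Proposition~\ref{analytic-levi} by writing the score through the chain rule in the outputs $y_i$, then evaluate the inner expectation over $\mbf{t}\mid\mbf{x}$ using the relevant low-order moments (vanishing Gaussian third central moment, Bernoulli independence with $\mathbb{E}[(t_i-y_i)^2]=y_i(1-y_i)$ and $\mathbb{E}[(t_i-y_i)^3]=y_i(1-y_i)(1-2y_i)$, and the one-hot identity $t_it_j=t_i\delta_{ij}$ for the categorical). Your $a_i$ abstraction packages the bookkeeping a bit more uniformly than the paper's case-by-case expansion, but the computations and the key cancellations are identical.
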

\begin{proof}
	In Appendix~\ref{app:losses}.
\end{proof}
For geodesic correction, we only need to compute connection-vector products $\Gamma_{\alpha\beta}^\mu \dot{\gamma}^\alpha \dot{\gamma}^\beta$. This can be done with a similar idea to Hessian-vector products~\citep{pearlmutter1994fast}, for which we provide detailed derivations and pseudocodes in Appendix~\ref{app:computation}. It can also be easily handled with automatic differentiation frameworks. We discuss some practical considerations on how to apply them in real cases in Appendix~\ref{app:settings}.
\begin{figure*}
	\centering
	\includegraphics[width=0.95\textwidth]{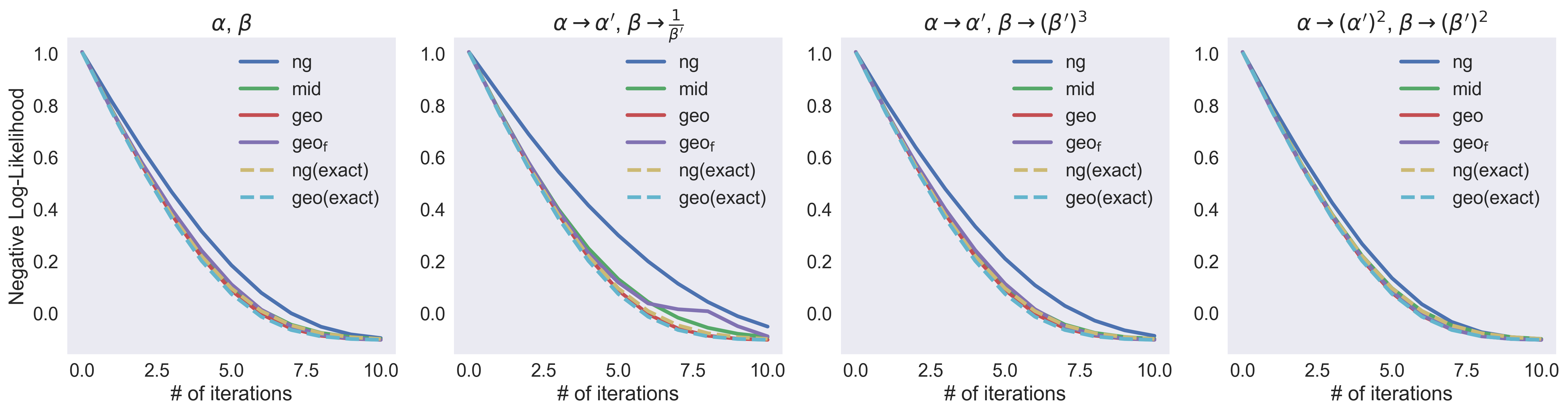}
	\caption{The effect of re-parameterizations on algorithms fitting a univariate Gamma distribution. Titles indicate which parameterization was used.}\label{fig:invariance}
\end{figure*}
\section{Related Work}\label{sec:related}

The idea of using the geodesic equation to accelerate gradient descent on manifolds was first introduced in~\citet{transtrum2011geometry}. However, our geodesic correction has several important differences. Our framework is generally applicable to all probabilistic models. This is to be contrasted with ``geodesic acceleration'' in ~\citet{transtrum2011geometry} and \citet{transtrum2012geodesic}, which can only be applied to nonlinear least squares. Additionally, our geodesic correction is motivated from the perspective of preserving higher-order invariance, while in~\citet{transtrum2012geodesic} it is motivated as a higher-order correction to the Gaussian-Newton approximation of the Hessian under the so-called ``small-curvature assumption''. We discuss and evaluate empirically in Appendix~\ref{app:small} why the small-curvature approximation does not hold for training deep neural networks.

There has been a resurgence of interest in applying natural gradient to neural network training. \citet{martens2010deep}{ and \citet{martens2011learning} show that Hessian-Free optimization, which is equivalent to natural gradient method in important cases in practice~\citep{pascanu2013revisiting,martens2014new}, is able to obtain state-of-the-art results in optimizing deep autoencoders and RNNs. To scale up natural gradient, some approximations for inverting the Fisher information matrix have been recently proposed, such as Krylov subspace descent~\citep{vinyals2012krylov}, FANG~\citep{grosse2015scaling} and K-FAC~\citep{martens2015optimizing,grosse2016kronecker,badistributed}.
\section{Experimental Evaluations}
In this section, we demonstrate the benefit of respecting higher-order invariance through experiments on synthetic optimization problems, deep neural net optimization tasks and policy optimization in deep reinforcement learning.

Algorithms have abbreviated names in figures. We use ``\textbf{ng}'' to denote the basic natural gradient, ``\textbf{geo}'' to denote the one with geodesic correction, ``\textbf{geo\textsubscript{f}}'' to denote the faster geodesic correction, and ``\textbf{mid}'' to abbreviate natural gradient update using midpoint integrator.

\subsection{Invariance}
In this experiment, we investigate the effect of invariance under different parameterizations of the same objective. We test different algorithms on fitting a univariate Gamma distribution via Maximum Log-Likelihood. The problem is simple---we can calculate the Fisher information metric and corresponding Levi-Civita connection accurately. Moreover, we can use ODE-solving software to numerically integrate the continuous natural gradient equation and calculate the exponential map used in Riemannian Euler method.

The pdf of Gamma distribution is 
\begin{equation*}
	p(x \mid \alpha, \beta) = \Gamma(x; \alpha, \beta) \triangleq \frac{\beta^\alpha}{\Gamma(\alpha)} x^{\alpha - 1} e^{-\beta x},
\end{equation*}
where $\alpha$, $\beta$ are shape and rate parameters. Aside from the original parameterization, we test three others: 1) $\alpha = \alpha', \beta = 1/\beta'$; 2) $\alpha = \alpha', \beta = (\beta')^3$ and 3) $\alpha = (\alpha')^2, \beta = (\beta')^2$, where $\alpha'$, $\beta'$ are new parameters. We generate 10000 synthetic data points from $\Gamma(X; 20, 20)$. During training, $\alpha$ and $\beta$ are initialized to 1 and the learning rate is fixed to 0.5. 

We summarize the results in \figref{fig:invariance}. Here ``\textbf{ng(exact)}'' is obtained by numerically integrating \eqref{eqn:ng}, and ``\textbf{geo(exact)}'' is obtained using Riemannian Euler method with a numerically calculated exponential map function. As predicted by the theory, both methods are exactly invariant under all parameterizations. From \figref{fig:invariance} we observe that the vanilla natural gradient update is not invariant under re-parameterizations, due to its finite step size. We observe that our midpoint natural gradient method and geodesic corrected algorithms are more resilient to re-parameterizations, and all lead to accelerated convergence of natural gradient.

\begin{figure*}
	\centering
	\includegraphics[width=0.8\textwidth]{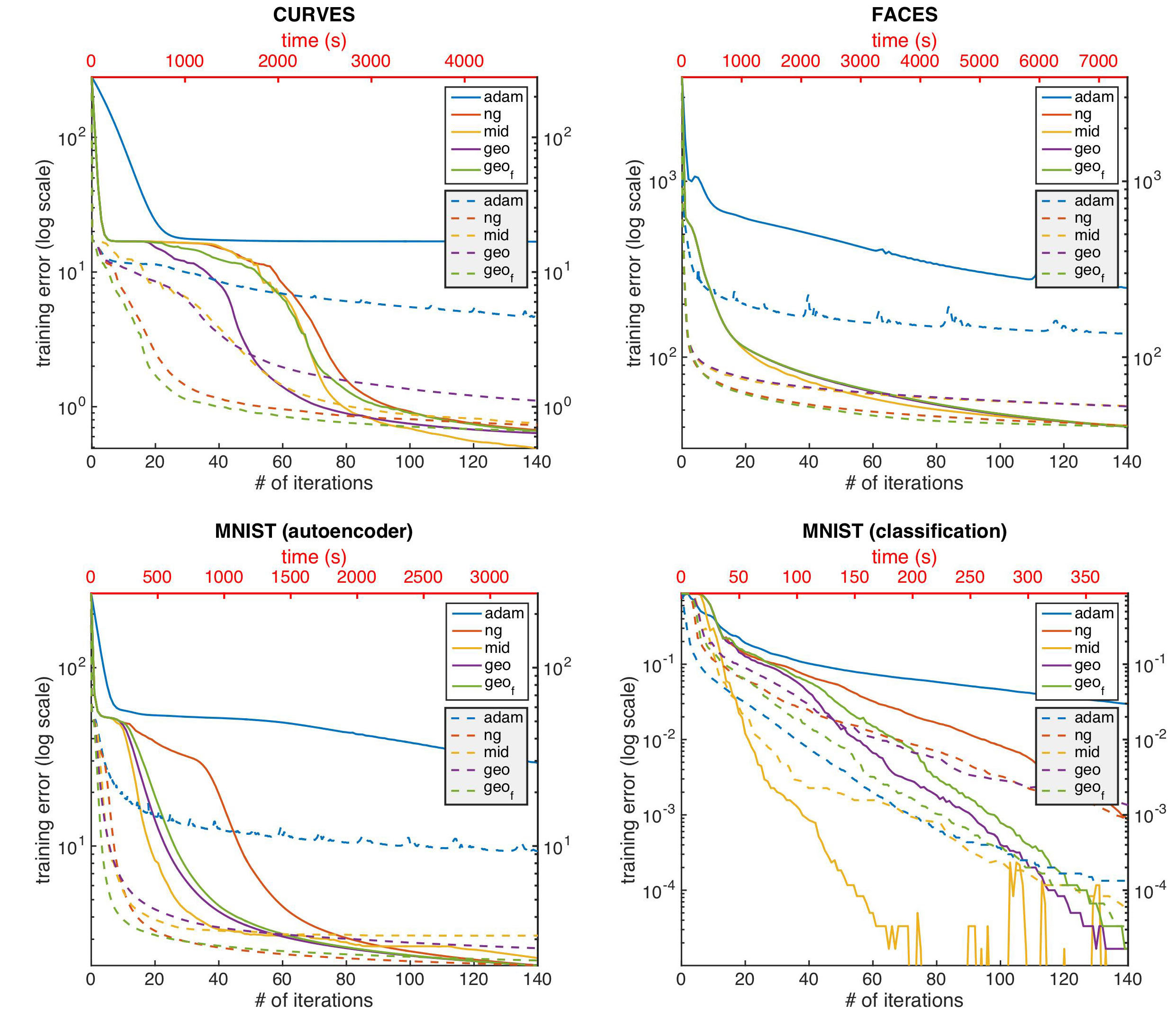}
	\caption{Training deep auto-encoders and classifiers with different acceleration algorithms. Solid lines show performance against number of iterations (bottom axes) while dashed lines depict performance against running time (top axes).} 	\label{fig:deepencoder}	\label{fig:time}
\end{figure*}


\begin{figure*}
\centering
\includegraphics[width=0.9\textwidth]{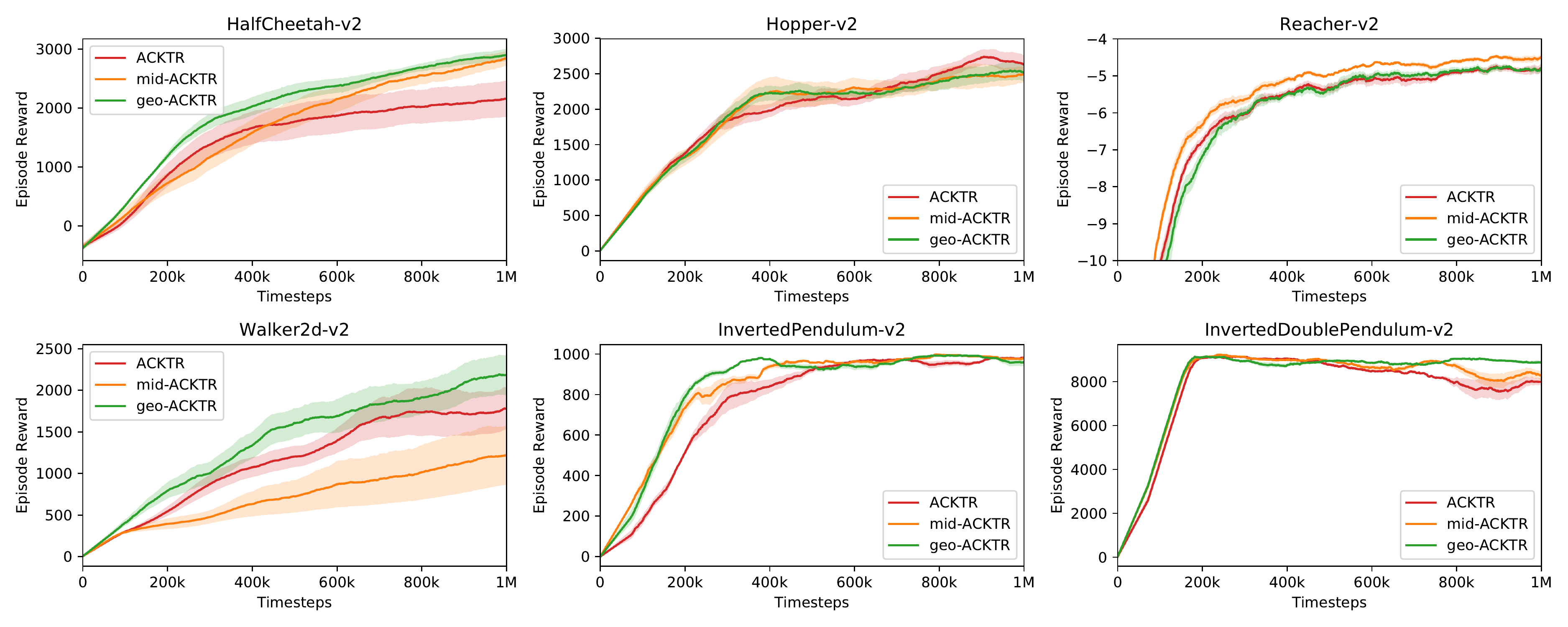}
\caption{Sample efficiency of model-free reinforcement learning on continuous control tasks~\citep{todorov2012mujoco}. Titles indicate the environment used in OpenAI Gym~\citep{brockman2016openai}.}
\label{fig:acktr}
\end{figure*}

\subsection{Training Deep Neural Nets}
We test our algorithms on deep autoencoding and classification problems. The datasets are CURVES, MNIST and FACES, all of which contain small gray-scale images of various objects, \ie, synthetic curves, hand-written digits and human faces. Since all deep networks use fully-connected layers and sigmoid activation functions, the tasks are non-trivial to solve even for modern deep learning optimizers, such as Adam~\cite{kingma2014adam}. Due to the high difficulty of this task, it has become a standard benchmark for neural network optimization algorithms~\citep{hinton2006reducing,martens2010deep,vinyals2012krylov,sutskever2013importance,martens2015optimizing}. Since these tasks only involve squared loss and binary cross-entropy, we additionally test multi-class cross-entropy on a MNIST classification task. Additional details can be found in Appendix~\ref{app:exp1}.

\figref{fig:deepencoder} summarizes our results on all three datasets. Here ``\textbf{ng}'' is the natural gradient method (Hessian-Free) as implemented in~\citet{martens2010deep}. For completeness, we also add Adam~\citep{kingma2014adam} into comparison and denote it as ``\textbf{adam}''. The training error reported for all datasets is the squared reconstruction error. Since the test error traces the training error very well, we only report the result of training error due to space limitation. It is clear that all acceleration methods lead to per iteration improvements compared to na\"{i}ve natural gradient. 
It is also remarkable that the performance of ``\textbf{geo\textsubscript{f}}'', while being roughly half as expensive as ``\textbf{geo}'' per iteration, does not degrade too much compared to ``\textbf{geo}''. For performance comparisons with respect to time, ``\textbf{geo\textsubscript{f}}'' is usually the best (or comparable to the best). ``\textbf{mid}'' and ``\textbf{geo}'' are relatively slower, since they need roughly twice as much computation per iteration as ``\textbf{ng}''. Nonetheless, ``\textbf{mid}'' still has the best time performance for MNIST classification task. 

We hereby emphasize again that geodesic correction methods are not aimed for providing more accurate solutions of the natural gradient ODE. Instead, they are higher-order approximations of an invariant solution (obtained by Riemannian Euler method), which itself is a first-order approximation to the exact solution. The improvements of both geodesic correction and midpoint integrator in \figref{fig:invariance} and \figref{fig:deepencoder} confirm our intuition that preserving higher-order invariance can accelerate natural gradient optimization.

\subsection{Model-free Reinforcement Learning for Continuous Control}
Finally, we evaluate our methods in reinforcement learning over six continuous control tasks~\citep{todorov2012mujoco}. Specifically, we consider improving the algorithm of ACKTR~\citep{wu2017scalable}, an efficient variant of natural policy gradient~\citep{kakade2002natural} which uses Kronecker factors~\citep{martens2015optimizing} to approximately compute the inverse of Fisher information matrix. For these methods, we evaluate sample efficiency (expected rewards per episode reached within certain numbers of interactions); in robotics tasks the cost of simulation often dominates the cost of the reinforcement learning algorithm, so requiring less interactions to achieve certain performance has higher priority than lower optimization time per iteration. Therefore we only test midpoint integrator and geodesic correction method for improving ACKTR, and omit the faster geodesic correction because of its less accurate approximation.

Figure~\ref{fig:acktr} describes the results on the continuous control tasks, where we use ``\textbf{mid-}'' and ``\textbf{geo-}'' to denote our midpoint integrator and geodesic correction methods for ACKTR respectively. In each environment, we consider the same constant learning rate schedule for all three methods (detailed settings in Appendix~\ref{app:exp2}). While the Fisher information matrices are approximated via Kronecker factors, our midpoint integrator and geodesic correction methods are still able to outperform ACKTR in terms of sample efficiency in most of the environments. This suggests that preserving higher-order invariance could also benefit natural policy gradients in reinforcement learning, and our methods can be scaled to large problems via approximations of Fisher information matrices.
\section{Conclusion}
Our contributions in this paper can be summarized as:

\begin{itemize}[leftmargin=*]
	\item We propose to measure the invariance of numerical schemes by comparing their convergence to idealized invariant solutions.
	\item To the best of our knowledge, we are the first to use midpoint integrators for natural gradient optimization.
	\item Based on Riemannian Euler method, we introduce geodesic corrected updates. Moreover, the faster geodesic correction has comparable time complexity with vanilla natural gradient. Computationally, we also introduce new backpropagation type algorithms to compute connection-vector products. Theoretically, we provide convergence proofs for both types of geodesic corrected updates.
	\item Experiments confirm the benefits of invariance and demonstrate faster convergence and improved sample efficiency of our proposed algorithms in supervised learning and reinforcement learning applications.
\end{itemize}

For future research, it would be interesting to perform a thorough investigation over applications in reinforcement learning, and studying faster variants and more efficient implementations of the proposed acceleration algorithms.

\subsection*{Acknowledgements}
The authors would like to thank Jonathan McKinney for helpful discussions. This work was supported by NSF grants \#1651565, \#1522054, \#1733686, Toyota Research Institute, Future of Life Institute, and Intel.




{
\bibliography{geo}}
\newpage
\appendix
\onecolumn
\section{Proofs}\label{app:proofs}

\begin{customprop}{1}
	The Levi-Civita connection of a neural network model manifold is given by
	\begin{align}
	\Gamma^\mu_{\alpha\beta} = g^{\mu\nu} \mbb{E}_{q(\mbf{x})} \mbb{E}_{p_\theta(\mbf{t}\mid\mbf{x})} \bigg\{ \partial_\nu \log p_\theta(\mbf{t}\mid\mbf{x})
	\bigg[ \partial_\alpha \partial_\beta \log p_\theta(\mbf{t}\mid\mbf{x}) + \frac{1}{2}\partial_\alpha \log p_\theta(\mbf{t}\mid\mbf{x}) \partial_\beta \log p_\theta(\mbf{t}\mid\mbf{x})\bigg]\bigg\}
	\nonumber
	\end{align}
\end{customprop}
\begin{proof}[Proof of \propref{analytic-levi}]
	Let $r_\theta(\mbf{z})$ be the joint distribution defined by $q(\mbf{x})p_\theta(\mbf{t}\mid\mbf{x})$. The Fisher information metric is $g_{\mu\nu} = \mbb{E}_{\mbf{z}} [\partial_\mu \log r_\theta(\mbf{z}) \partial_\nu \log r_\theta(\mbf{z})]$. The partial derivative $\partial_\mu g_{\alpha\beta}$ can be computed via scoring function trick, \ie, $\partial_\mu g_{\alpha\beta} = \partial_\mu \mbb{E}_{\mbf{z}}[\partial_\alpha \log r_\theta(\mbf{z})\partial_\beta \log r_\theta(\mbf{z})] = \mbb{E}_\mbf{z}[\partial_\mu \log r_\theta(\mbf{z}) \partial_\alpha \log r_\theta(\mbf{z}) \partial_\beta \log r_\theta(\mbf{z}) + \partial_\mu\partial_\alpha \log r_\theta(\mbf{z}) \partial_\beta \log r_\theta(\mbf{z})+\partial_\alpha \log r_\theta(\mbf{z}) \partial_\mu \partial_\beta \log r_\theta(\mbf{z})]$. All other partial derivatives can be obtained via symmetry.
	
	According to the definition, 
	\begin{align*}
	\Gamma^\mu_{\alpha\beta} &= \frac{1}{2} g^{\mu\nu} (\partial_\alpha g_{\nu\beta} + \partial_\beta g_{\nu\alpha} - \partial_\nu g_{\alpha\beta})
	\\ &= g^{\mu\nu}\mbb{E}_{\mbf{z}}\big\{ \partial_\nu \log r_\theta(\mbf{z}) \big[ \partial_\alpha\partial_\beta \log r_\theta(\mbf{z}) + \frac{1}{2} \partial_\alpha \log r_\theta(\mbf{z}) \partial_\beta \log r_\theta(\mbf{z})\big]\big\}.
	\end{align*}
	The proposition is proved after replacing $r_\theta(\mbf{z})$ with $q(\mbf{x})p_\theta(\mbf{t}\mid\mbf{x})$.
\end{proof}

\begin{customthm}{1}
	Consider the initial value problem $\dot x = f(t, x(t)), x(0) = a, 0\leq t \leq T$, where $f(t, x(t))$ is a continuous function in a region $U$ containing
	\begin{align*}
	\mcal{D} := \{ (t, x) \mid  0\leq t\leq T ,\norm{x-a} \leq X \}.
	\end{align*}
	Suppose $f(t, x)$ also satisfies Lipschitz condition, so that
	\begin{align*}
	\norm{f(t, x) - f(t, y)} \leq L_f\norm{x-y},\quad \forall (t, x)\in \mcal{D} \wedge (t,y) \in \mcal{D}.
	\end{align*}
	In addition, let $M = \sup_{(t,x) \in \mcal{D}} \norm{f(t,x)}$ and assume that $TM \leq X$.
	Next, suppose $x$ to be the global coordinate of a Riemannian $C^\infty$-manifold $\mcal{M}$ with Levi-Civita connection $\Gamma_{\alpha\beta}^{\mu}(x)$. Then, let the interval $[0, T]$ be subdivided into $n$ equal parts by the grid points $0 = t_0 < t_1 < \cdots < t_n = T$, with the grid size $h = T / n$. Denote $x_k$ as the numerical solution given by Euler's update with geodesic correction~\eqref{main:geo:eq}, so that
	\begin{align}
	x_{k+1}^\mu = x_k^\mu + h f^\mu(t_k, x_k) - \frac{1}{2} h^2 \Gamma^{\mu}_{\alpha\beta}(x_k)f^\alpha(t_k, x_k) f^\beta(t_k, x_k),\quad\quad x_0 = a.\label{eq:thm1_geoacc}
	\end{align}
	Let $\hat{x}_k$ be the numerical solution obtained by faster geodesic correction~\eqref{main:fastergeo:eq}
	\begin{align}
	\hat{x}_{k+1}^\mu = \hat{x}_{k}^\mu + hf^\mu(t_k, \hat{x}_k) - \frac{1}{2}h^2 \Gamma^\mu_{\alpha\beta}(\hat{x}_k)\delta \hat{x}_k^\alpha \delta \hat{x}_k^\beta,\label{eq:thm1_fastergeo}
	\end{align}
	where $\delta \hat{x}_k = (\hat{x}_k - \hat{x}_{k-1})/h$.
	Finally, define the error $e_k$ at each grid point $x_k$ by $e_k = x_k' - x_k$, and $\hat{e}_k = x_k' - \hat{x}_k$, where $x_k'$ is the numerical solution given by Riemannian Euler method, \ie,
	\begin{align}
	x_{k+1}' = \operatorname{Exp}(x_k', hf(t_k, x_k')),\quad\quad x_0' = a.\label{eq:thm1_reuler}
	\end{align}
	Then, assuming $\norm{x_k - a} \leq X$, $\norm{\hat{x}_k -a}\leq X$, and $\norm{x_k' - a}\leq X$, it follows that
	\begin{equation*}
	\norm{e_k} \leq \mcal{O}(h^2)\quad \text{and} \quad	\norm{\hat{e}_k} \leq \mcal{O}(h^2),\quad  h \rightarrow 0, \quad \forall k \in [n].
	\end{equation*}
	As a corollary, both Euler's update with geodesic correction and its faster variant converge to the solution of ODE in 1st order.
\end{customthm}

\begin{proof}[Proof of \thmref{thm:converge}]
		Since $f(t, x)$ is continuous in the compact region $\mcal{D}$ satisfying Lipschitz condition, and $TM\leq X$, Picard-Lindelöf theorem ensures that there exists a unique continuously differentiable solution in $\mcal{D}$.
		
		On the smooth Riemannian manifold $\mcal{M}$, the corresponding region $\mcal{X} \subset \mcal{M}$ for the set of coordinates $\{x \mid \norm{x-a}\leq X\}$ is compact. Following Lemma 5.7, Lemma 5.8 in~\cite{petersen2006riemannian} and a standard covering-of-compact-set argument, there exists a constant $\epsilon > 0$ such that geodesics are defined everywhere within the region $\mcal{G}:=  \{ (p,v, s)\mid p\in \mcal{X}, v\in\mcal{T}_p\mcal{M} \wedge \norm{v} \leq M, 0\leq s\leq \epsilon \}$. Tychonoff's theorem affirms that $\mcal{G}$ is compact in terms of product topology and we henceforth assume $h < \epsilon$. Next, let $\gamma(p,v,s) := \operatorname{Exp}(p,s v)$, where $s \in (-\epsilon, \epsilon)$. From continuous dependence on initial conditions of ODE, we have $\gamma(p,v,s) \in C^\infty$ (see, \eg, Theorem 5.11 in~\cite{petersen2006riemannian}).	
		
		Using Taylor expansion with Lagrange remainder, we can rewrite update rule \eqref{eq:thm1_reuler} as
		\begin{align}
		x'_{k+1} = x'_k + \frac{\partial \gamma(x_k', v_k', 0)}{\partial s} h + \frac{1}{2} \frac{\partial^2 \gamma(x_k', v_k', 0)}{\partial s^2}h^2 + \frac{1}{6} \frac{\partial^3 \gamma(x_k', v_k', \xi h)}{\partial s^3} h^3, \label{eq:thm1_1}
		\end{align}	
		where $v_k' = f(t_k, x_k')$ and $\xi \in [0,1]$. In the meanwhile, update rule \eqref{eq:thm1_geoacc} can be equivalently written as
		\begin{align}
		x_{k+1} = x_k + \frac{\partial \gamma(x_k, v_k, 0)}{\partial s}h + \frac{1}{2} \frac{\partial^2 \gamma(x_k, v_k, 0)}{\partial s^2} h^2,\label{eq:thm1_2}
		\end{align}
		where $v_k = f(t_k, x_k)$. Subtracting \eqref{eq:thm1_2} from \eqref{eq:thm1_1}, we obtain
		\begin{equation}
		\begin{aligned}
		e_{k+1} = e_k &+ \left[\frac{\partial \gamma(x_k', v_k', 0)}{\partial s} - \frac{\partial \gamma(x_k, v_k, 0)}{\partial s}\right] h \\
		&+ \frac{1}{2}\left[ \frac{\partial^2 \gamma(x_k', v_k', 0)}{\partial s^2} - \frac{\partial^2 \gamma(x_k, v_k, 0)}{\partial s^2} \right]h^2\\
		&+ \frac{1}{6} \frac{\partial^3 \gamma(x_k', v_k', \xi h)}{\partial s^3} h^3.
		\end{aligned}\label{eq:thm1_3}
		\end{equation}
				
		Recall that $\gamma(p,v,s)$ is $C^\infty$ on $\mcal{G}$, by extreme value theorem we can denote 
		\begin{align*}
		\Gamma_1 = \sup_{(p,v,s)\in\mcal{G}} \norm{\frac{\partial^2 \gamma(p,v,s)}{\partial p \partial s}},&\quad \Gamma_2 = \sup_{(p,v,s)\in\mcal{G}} \norm{\frac{\partial^2 \gamma(p,v,s)}{\partial v\partial s}}\\
		\Gamma_3 = \sup_{(p,v,s)\in\mcal{G}} \norm{\frac{\partial^3 \gamma(p,v,s)}{\partial p\partial s^2}},&\quad \Gamma_4 = \sup_{(p,v,s)\in\mcal{G}} \norm{\frac{\partial^3 \gamma(p,v,s)}{\partial v\partial s^2}}\\
		\Gamma_5 = \sup_{(p,v,s)\in\mcal{G}} &\norm{\frac{\partial^3 \gamma(p,v,s)}{\partial s^3}},
		\end{align*}
		where $\norm{\cdot}$ can be chosen arbitrarily as long as they are compatible (\eg, use operator norms for tensors), since norms in a finite dimensional space are equivalent to each other. 
		
		With upper bounds $\Gamma_1$, $\Gamma_2$, $\Gamma_3$, $\Gamma_4$ and $\Gamma_5$, Eq.~\eqref{eq:thm1_3} has the estimation via Lagrange mean value theorem,
		\begin{align*}
		\norm{e_{k+1}} &\leq \norm{e_k} + h (\Gamma_1 \norm{e_k} + \Gamma_2 \norm{v_k'-v_k})+ \frac{1}{2} h^2(\Gamma_3 \norm{e_k} + \Gamma_4 \norm{v_k'-v_k}) + \frac{1}{6}\Gamma_5 h^3\\
		&\leq \norm{e_k} + h (\Gamma_1 \norm{e_k} + L_f\Gamma_2 \norm{e_k})+ \frac{1}{2} h^2(\Gamma_3 \norm{e_k} + L_f\Gamma_4 \norm{e_k}) + \frac{1}{6}\Gamma_5 h^3\\
		&\leq \left( 1+\Gamma_1 h + \Gamma_2 h L_f + \frac{1}{2}\Gamma_3 h^2 + \frac{1}{2}\Gamma_4 L_f h^2 \right)\norm{e_k} + \frac{1}{6} \Gamma_5 h^3\\
		&\leq \cdots\\
		&\leq \left( 1+C_1 h+\frac{1}{2}C_2 h^2 \right)^k \left( \norm{e_0} + \frac{\Gamma_5 h^2}{6C_1 + 3C_2 h} \right) - \frac{\Gamma_5 h^2}{6C_1 + 3C_2 h}\\
		&\leq \left(e^{C_1 k h+\frac{1}{2}C_2 kh^2} - 1\right)\left(\frac{\Gamma_5 h^2}{6C_1 + 3C_2 h} \right)\\
		&\leq \left(e^{C_1 T+\frac{1}{2}C_2 T^2} - 1\right)\left(\frac{\Gamma_5 h^2}{6C_1 + 3C_2 h} \right)\\
		&\leq \frac{\Gamma_5}{6C_1}\left(e^{C_1 T+\frac{1}{2}C_2 T^2} - 1\right) h^2 = \mcal{O}(h^2),
		\end{align*}
		where from the fifth line we substitute $C_1$ for $\Gamma_1 + \Gamma_2 L_f$ and $C_2$ for $\Gamma_3 + \Gamma_4 L_f$, and we used the fact that $\norm{e_0}=0$ and the inequality $(1+w)^k \leq \exp(kw)$. This means geodesic correction converges to the invariant solution obtained using a Riemannian Euler method with a 2nd-order rate. Since Riemannian Euler method is itself a first-order algorithm~\citep{Bielecki2002-ix}, geodesic correction also converges to the exact solution in 1st order.
		
		Then, we consider the faster geodesic update rule~\eqref{eq:thm1_fastergeo}
		\begin{align}
		&\hat{x}_{k+1}^\mu = \hat{x}_k^\mu + \frac{\partial \gamma^\mu(\hat{x}_k, \hat{v}_k, 0)}{\partial s} h - \frac{1}{2}\Gamma_{\alpha\beta}^\mu(\hat{x}_k) (\hat{x}_k^\alpha - \hat{x}_{k-1}^\alpha) (\hat{x}_k^\beta - \hat{x}_{k-1}^\beta)\notag\\
		=&  \hat{x}_k^\mu + \frac{\partial \gamma^\mu(\hat{x}_k, \hat{v}_k, 0)}{\partial s} h - \frac{1}{2}\Gamma_{\alpha\beta}^\mu(\hat{x}_k) \bigg[h \frac{\partial \gamma^\alpha(\hat{x}_{k-1}, \hat{v}_{k-1},0)}{\partial s} + \frac{1}{2}h^2 \Gamma_{ab}^\alpha(\hat{x}_{k-1}) f^a(t_{k-1}, \hat{x}_{k-1})\notag\\
		&f^b(t_{k-1}, \hat{x}_{k-1})\bigg]\cdot\left[h \frac{\partial \gamma^\beta(\hat{x}_{k-1}, \hat{v}_{k-1},0)}{\partial s} + \frac{1}{2}h^2 \Gamma_{cd}^\beta(\hat{x}_{k-1}) f^c(t_{k-1}, \hat{x}_{k-1})f^d(t_{k-1}, \hat{x}_{k-1})\right]\notag\\
		=& \hat{x}_k^\mu + \frac{\partial \gamma^\mu(\hat{x}_k, \hat{v}_k, 0)}{\partial s} h - \frac{1}{2}\Gamma_{\alpha\beta}^\mu \frac{\partial \gamma^\alpha(\hat{x}_{k-1}, \hat{v}_{k-1},0)}{\partial s}\frac{\partial \gamma^\beta(\hat{x}_{k-1}, \hat{v}_{k-1},0)}{\partial s}h^2 \notag\\
		& +\Phi^\mu(h, \hat{x}_k, \hat{x}_{k-1}, t_{k-1}, \hat{v}_{k-1})\notag\\
		=& \hat{x}_k^\mu + \frac{\partial \gamma^\mu(\hat{x}_k, \hat{v}_k, 0)}{\partial s} h + \frac{1}{2} \frac{\partial^2 \gamma^\mu(\hat{x}_{k-1}, \hat{v}_{k-1},0)}{\partial s^2}h^2 + \Phi^\mu(h, \hat{x}_k, \hat{x}_{k-1}, t_{k-1}, \hat{v}_{k-1}),\label{eq:thm1_4}
		\end{align}
		where the last line utilizes geodesic equation~\eqref{eq:geo} and $\Phi^\mu(h, \hat{x}_k, \hat{x}_{k-1}, t_{k-1}, \hat{v}_{k-1})$ is
		\begin{align*}
		&h^3 \underbrace{\frac{1}{2} \Gamma^{\mu}_{\alpha\beta}(\hat{x}_k) \Gamma_{ab}^\alpha(\hat{x}_{k-1})f^af^b \frac{\partial \gamma^\beta(\hat{x}_{k-1}, \hat{v}_{k-1},0)}{\partial s}}_{=:\Phi_1(\hat{x}_k, \hat{x}_{k-1}, \hat{v}_{k-1}, t_{k-1})}\\
		+& h^4 \underbrace{\frac{1}{8} \Gamma_{\alpha\beta}^\mu(\hat{x}_k) \Gamma_{ab}^\alpha(\hat{x}_{k-1})\Gamma_{cd}^\beta (\hat{x}_{k-1}) f^a f^b f^c f^d}_{=:\Phi_2(\hat{x}_k, \hat{x}_{k-1}, t_{k-1})},
		\end{align*}
		where $f^i = f^i(t_{k-1}, \hat{x}_{k-1}), i\in\{a,b,c,d\}$. Both $\Phi_1$ and $\Phi_2$ are $C^\infty$ functions on compact sets. As a result, extreme value theorem states that there exist constants $A$ and $B$ so that $\sup \norm{\Phi_1} = A$ and $\sup \norm{\Phi_2} = B$.
		
		Subtracting \eqref{eq:thm1_4} from \eqref{eq:thm1_2} and letting $\theta_k = x_k - \hat{x}_k$ we obtain
		\begin{align*} 		
		\theta_{k+1} = \theta_{k}  &+ \left[\frac{\partial \gamma(x_k, v_k, 0)}{\partial s} - \frac{\partial \gamma(\hat{x}_k, \hat{v}_k, 0)}{\partial s}\right] h \\
		&+ \frac{1}{2}\left[ \frac{\partial^2 \gamma(x_k, v_k, 0)}{\partial s^2} - \frac{\partial^2 \gamma(\hat{x}_k, \hat{v}_k, 0)}{\partial s^2} \right]h^2\\
		&+ h^3 \Phi_1(\hat{x}_k, \hat{x}_{k-1}, \hat{v}_{k-1}, t_{k-1}) + h^4 \Phi_2(\hat{x}_k, \hat{x}_{k-1}, t_{k-1}),
		\end{align*}
		and the error can be bounded by
		\begin{align*}
		\norm{\theta_{k+1}} &\leq \norm{\theta_k} + h (\Gamma_1 \norm{\theta_k} + \Gamma_2 \norm{v_k - \hat{v}_k}) + \frac{1}{2} h^2 (\Gamma_3 \norm{\theta_k} + \Gamma_4 \norm{v_k - \hat{v}_k}) + Ah^3 + Bh^4\\
		&\leq \norm{\theta_k} + h (\Gamma_1 \norm{\theta_k} + L_f\Gamma_2\norm{\theta_k}) + \frac{1}{2}h^2 (\Gamma_3 \norm{\theta_k} + L_f\Gamma_4\norm{\theta_k}) + Ah^3 + Bh^4\\
		&\leq (1 + C_1 h + \frac{1}{2}C_2 h^2)\norm{\theta_k} + Ah^3 + Bh^4\\
		&\leq \cdots\\
		&\leq \left(1 + C_1 h + \frac{1}{2}C_2 h^2\right)^k \left( \norm{\theta_0} + \frac{Ah^2 + Bh^3}{C_1 + \frac{1}{2} C_2 h} \right) - \frac{Ah^2 + Bh^3}{C_1 + \frac{1}{2} C_2 h}\\
		&\leq \left(e^{C_1 kh + \frac{1}{2}C_2 kh^2} - 1\right) \frac{Ah^2 + Bh^3}{C_1 + \frac{1}{2}C_2 h}\\
		&\leq \left( e^{C_1 T + \frac{1}{2} C_2 T^2} - 1 \right)\frac{Ah^2 + Bh^3}{C_1} = \mcal{O}(h^2).
		\end{align*}
		Finally, $\norm{\hat{e}_k} \leq \norm{e_k} + \norm{\theta_k} = \mcal{O}(h^2)$ as $h\rightarrow 0$. This shows that faster geodesic correction converges to the invariant solution of Riemannian Euler method with a 2nd-order rate and the exact solution of ODE in 1st order.
\end{proof}

\section{Derivations of Connections for Different Losses}\label{app:losses}
In this section we show how to derive the formulas of Fisher information matrices and Levi-Civita connections for three common losses used in our experiments.

\subsection{Squared Loss}

The squared loss is induced from negative log-likelihood of the probabilistic model
\begin{align*}
p_\theta(\mbf{t}\mid\mbf{x}) = \prod_{i=1}^o \mcal{N}(t_i\mid y_i, \sigma^2),
\end{align*}
and the log-likelihood is
\begin{align*}
\ln p_\theta(\mbf{t}\mid\mbf{x}) = -\frac{1}{2\sigma^2} \sum_{i=1}^o (t_i - y_i)^2 + \text{const}.
\end{align*}

According to the definition of Fisher information matrix,
\begin{align*}
g_{\mu\nu} &= \mbb{E}_{q(\mbf{x})} [\mbb{E}_{p_\theta(\mbf{t}\mid\mbf{x})}[\partial_\mu \ln p_\theta(\mbf{t}\mid\mbf{x})\partial_\nu \ln p_\theta(\mbf{t}\mid\mbf{x})]]\\
&= \mbb{E}_{q(\mbf{x})}\left[\mbb{E}_{p_\theta(\mbf{t}\mid\mbf{x})}\left[\frac{1}{\sigma^4} \sum_{ij} (t_i-y_i)(t_j-y_j)\partial_\mu y_i \partial_\nu y_j\right]\right]\\
&= \mbb{E}_{q(\mbf{x})}\left[\frac{1}{\sigma^4} \sum_{ij} \delta_{ij} \sigma^2 \partial_\mu y_i \partial_\nu y_j\right]\\
&= \frac{1}{\sigma^2}\sum_{i=1}^o \mbb{E}_{q(\mbf{x})}\left[\partial_\mu y_i \partial_\nu y_j\right].
\end{align*}

To compute the Levi-Civita connection $\Gamma_{\alpha\beta}^\mu$, we first calculate the following component
\begin{align*}
&\mbb{E}_{p_\theta(\mbf{t}\mid\mbf{x})}[\partial_\nu \ln p_\theta(\mbf{t}\mid\mbf{x}) \partial_\alpha\partial_\beta \ln p_\theta(\mbf{t}\mid\mbf{x})]\\
=&\mbb{E}_{p_\theta(\mbf{t}\mid\mbf{x})}\bigg[ \bigg( \frac{1}{\sigma^2}\sum_{i=1}^o (t_i-y_i)\partial_\nu y_i \bigg)\bigg( \frac{1}{\sigma^2} \sum_{j=1}^o -\partial_\alpha y_j \partial_\beta y_j + (t_j-y_j)\partial_\alpha\partial_\beta y_j \bigg)\bigg]\\
=&\frac{1}{\sigma^4}\mbb{E}_{p_\theta(\mbf{t}\mid\mbf{x})}\bigg[ \sum_{ij}(t_i-y_i)(t_j-y_j)\partial_\nu y_i \partial_\alpha\partial_\beta y_j\bigg]\\
=&\frac{1}{\sigma^4}\mbb{E}_{p_\theta(\mbf{t}\mid\mbf{x})}\bigg[ \sum_{ij} \sigma^2 \delta_{ij}\partial_\nu y_i \partial_\alpha\partial_\beta y_j\bigg]\\
=&\frac{1}{\sigma^2}\sum_{i=1}^o\mbb{E}_{p_\theta(\mbf{t}\mid\mbf{x})}\bigg[ \partial_\nu y_i \partial_\alpha\partial_\beta y_i\bigg].
\end{align*}

The second component of $\Gamma_{\alpha\beta}^\mu$ we have to consider is
\begin{align*}
&\mbb{E}_{p_\theta(\mbf{t}\mid\mbf{x})}\left[\frac{1}{2}\partial_\nu \ln p_\theta(\mbf{t}\mid\mbf{x}) \partial_\alpha \ln p_\theta(\mbf{t}\mid\mbf{x}) \partial_\beta \ln p_\theta(\mbf{t}\mid\mbf{x})\right]\\
=&\frac{1}{2\sigma^6}\sum_{ijk}\mbb{E}_{p_\theta(\mbf{t}\mid\mbf{x})}[(t_i-y_i)(t_j-y_j)(t_k-y_k)\partial_\nu y_i \partial_\alpha y_i \partial_\beta y_i]\\
=&0,
\end{align*}
where the last equality uses the property that third moment of a Gaussian distribution is 0.

Combining the above two components, we obtain the Levi-Civita connection
\begin{align*}
\Gamma_{\alpha\beta}^\mu = \frac{1}{\sigma^2} \sum_{i=1}^o g^{\mu\nu} \mbb{E}_{q(\mbf{x})} [\partial_\nu y_i \partial_\alpha\partial_\beta y_i].
\end{align*}

\subsection{Binary Cross-Entropy}

The binary cross-entropy loss is induced from negative log-likelihood of the probabilistic model
\begin{align*}
p_\theta(\mbf{t}\mid\mbf{x}) = \prod_{i=1}^o y_i^{t_i} (1- y_i)^{1-t_i},
\end{align*}
and the log-likelihood is
\begin{align*}
\ln p_\theta(\mbf{t}\mid\mbf{x}) = \sum_{i=1}^o t_i \ln y_i + (1-t_i) \ln(1-y_i).
\end{align*}

According to the definition of Fisher information matrix,
\begin{align*}
g_{\mu\nu} &= \mbb{E}_{q(\mbf{x})} [\mbb{E}_{p_\theta(\mbf{t}\mid\mbf{x})}[\partial_\mu \ln p_\theta(\mbf{t}\mid\mbf{x})\partial_\nu \ln p_\theta(\mbf{t}\mid\mbf{x})]]\\
&= \mbb{E}_{q(\mbf{x})}\left[\mbb{E}_{p_\theta(\mbf{t}\mid\mbf{x})}\left[\sum_{ij} \frac{(t_i-y_i)(t_j-y_j)}{y_iy_j(1-y_i)(1-y_j)}\partial_\mu y_i \partial_\nu y_j\right]\right]\\
&= \mbb{E}_{q(\mbf{x})}\left[\sum_{ij}\delta_{ij}\frac{y_i(1-y_i)}{y_iy_j(1-y_i)(1-y_j)}\partial_\mu y_i \partial_\nu y_j\right]\\
&= \sum_{i=1}^o \mbb{E}_{q(\mbf{x})}\left[\frac{1}{y_i(1-y_i)}\partial_\mu y_i \partial_\nu y_j\right].
\end{align*}

To compute the Levi-Civita connection $\Gamma_{\alpha\beta}^\mu$, we first calculate the following component
\begin{align*}
&\mbb{E}_{p_\theta(\mbf{t}\mid\mbf{x})}[\partial_\nu \ln p_\theta(\mbf{t}\mid\mbf{x}) \partial_\alpha\partial_\beta \ln p_\theta(\mbf{t}\mid\mbf{x})]\\
=&\mbb{E}_{p_\theta(\mbf{t}\mid\mbf{x})}\bigg[ \bigg( \sum_{i=1}^o \frac{t_i-y_i}{y_i(1-y_i)}\partial_\mu y_i\bigg)\bigg(\sum_{j=1}^o -\frac{(t_j-y_j)^2}{y_j^2(1-y_j)^2}\partial_\alpha y_j\partial_\beta y_j + \frac{t_j-y_j}{y_j(1-y_j)}\partial_\alpha \partial_\beta y_j\bigg)\bigg]\\
=&\mbb{E}_{p_\theta(\mbf{t}\mid\mbf{x})}\left[ - \sum_{ij} \frac{(t_i-y_i)(t_j-y_j)^2}{y_i(1-y_i)y_j^2(1-y_j)^2}\partial_\mu y_i \partial_\alpha y_j \partial_\beta y_j  + \sum_{ij}\frac{(t_i-y_i)(t_j-y_j)}{y_i(1-y_i)y_j(1-y_j)}\partial_\mu y_i \partial_\alpha \partial_\beta y_j\right]\\
=&\mbb{E}_{p_\theta(\mbf{t}\mid\mbf{x})}\left[ - \sum_{i=1}^o \frac{(t_i-y_i)^3}{y_i^3(1-y_i)^3}\partial_\mu y_i \partial_\alpha y_i \partial_\beta y_i  + \sum_{i=1}^o\frac{(t_i-y_i)^2}{y_i^2(1-y_i)^2}\partial_\mu y_i \partial_\alpha \partial_\beta y_i\right]\\
=& - \sum_{i=1}^o \frac{(1-y_i)^3y_i + (-y_i)^3(1-y_i)}{y_i^3(1-y_i)^3}\partial_\mu y_i \partial_\alpha y_i \partial_\beta y_i  + \sum_{i=1}^o\frac{1}{y_i(1-y_i)}\partial_\mu y_i \partial_\alpha \partial_\beta y_i\\
=&\sum_{i=1}^o \frac{2y_i -1}{y_i^2(1-y_i)^2}\partial_\mu y_i \partial_\alpha y_i \partial_\beta y_i  + \sum_{i=1}^o\frac{1}{y_i(1-y_i)}\partial_\mu y_i \partial_\alpha \partial_\beta y_i.
\end{align*}
where in the first line we use the equality $(t_j-y_j)^2=(t_j-2t_j y_j+y_j^2)$ which holds given that $t_j \in \{0,1\}$.

The second component of $\Gamma_{\alpha\beta}^\mu$ is
\begin{align*}
&\mbb{E}_{p_\theta(\mbf{t}\mid\mbf{x})}\left[\frac{1}{2}\partial_\nu \ln p_\theta(\mbf{t}\mid\mbf{x}) \partial_\alpha \ln p_\theta(\mbf{t}\mid\mbf{x}) \partial_\beta \ln p_\theta(\mbf{t}\mid\mbf{x})\right]\\
=& \frac{1}{2}\sum_{ijk}\mbb{E}_{p_\theta(\mbf{t}\mid\mbf{x})}\left[ \frac{(t_i-y_i)(t_j-y_j)(t_k-y_k)}{y_iy_jy_k(1-y_i)(1-y_j)(1-y_k)} \partial_\nu y_i \partial_\alpha y_j \partial_\beta y_k \right]\\
=&\frac{1}{2} \sum_{i=1}^o \mbb{E}_{p_\theta(\mbf{t}\mid\mbf{x})}\left[ \frac{(t_i-y_i)^3}{y_i^3(1-y_i)^3} \partial_\nu y_i \partial_\alpha y_i \partial_\beta y_i \right]\\
=&\sum_{i=1}^o  \frac{1-2y_i}{2y_i^2(1-y_i)^2} \partial_\nu y_i \partial_\alpha y_i \partial_\beta y_i,
\end{align*}

Combining the above two components, we obtain the Levi-Civita connection
\begin{align*}
\Gamma^\mu_{\alpha\beta} = g^{\mu\nu} \sum_{i=1}^o \mbb{E}_{q(\mbf{x})}\bigg[ \frac{2y_i-1}{2y_i^2(1-y_i)^2} \partial_\nu y_i \partial_\alpha y_i \partial_\beta y_i + \frac{1}{y_i(1-y_i)}\partial_\nu y_i\partial_\alpha\partial_\beta y_i \bigg].
\end{align*}

\subsection{Multi-Class Cross-Entropy}

The multi-class cross-entropy loss is induced from negative log-likelihood of the probabilistic model
\begin{align*}
p_\theta(\mbf{t}\mid\mbf{x}) = \prod_{i=1}^o y_i^{t_i},
\end{align*}
and the log-likelihood is
\begin{align*}
\ln p_\theta(\mbf{t}\mid\mbf{x}) = \sum_{i=1}^{o} t_i \ln y_i.
\end{align*}

According to the definition of Fisher information matrix,
\begin{align*}
g_{\mu\nu} &= \mbb{E}_{q(\mbf{x})} [\mbb{E}_{p_\theta(\mbf{t}\mid\mbf{x})}[\partial_\mu \ln p_\theta(\mbf{t}\mid\mbf{x})\partial_\nu \ln p_\theta(\mbf{t}\mid\mbf{x})]]\\
&= \mbb{E}_{q(\mbf{x})}\left[\mbb{E}_{p_\theta(\mbf{t}\mid\mbf{x})}\left[\sum_{ij}\frac{t_i t_j}{y_i y_j}\partial_\mu y_i \partial_\nu y_j\right]\right]\\
&= \mbb{E}_{q(\mbf{x})}\left[ \sum_{i=1}^o \mbb{E}_{p_\theta(\mbf{t}\mid\mbf{x})}\left[\frac{t_i^2}{y_i^2} \right]\partial_\mu y_i \partial_\nu y_i \right]\\
&=\sum_{i=1}^o \mbb{E}_{q(\mbf{x})}\left[ \frac{1}{y_i} \partial_\mu y_i \partial_\nu y_i \right].
\end{align*}

To compute the Levi-Civita connection $\Gamma_{\alpha\beta}^\mu$, we first calculate the following component
\begin{align*}
&\mbb{E}_{p_\theta(\mbf{t}\mid\mbf{x})}[\partial_\nu \ln p_\theta(\mbf{t}\mid\mbf{x}) \partial_\alpha\partial_\beta \ln p_\theta(\mbf{t}\mid\mbf{x})]\\
=&\mbb{E}_{p_\theta(\mbf{t}\mid\mbf{x})}\bigg[ \bigg( \sum_{i=1}^o \frac{t_i}{y_i} \partial_\nu y_i \bigg)\bigg( \sum_{j=1}^o -\frac{t_j}{y_j^2} \partial_\alpha y_j \partial_\beta y_j + \frac{t_j}{y_j} \partial_\alpha \partial_\beta y_j \bigg)\bigg]\\
=&\mbb{E}_{p_\theta(\mbf{t}\mid\mbf{x})}\bigg[ \sum_{i=1}^o -\frac{t_i^2}{y_i^3}\partial_\nu y_i \partial_\alpha y_i \partial_\beta y_i + \frac{t_i^2}{y_i^2} \partial_\nu y_i \partial_\alpha \partial_\beta y_i \bigg]\\
=&\sum_{i=1}^o -\frac{1}{y_i^2} \partial_\nu y_i \partial_\alpha y_i \partial_\beta y_i + \frac{1}{y_i} \partial_\nu y_i \partial_\alpha \partial_\beta y_i.
\end{align*}

The second component of $\Gamma_{\alpha\beta}^\mu$ we have to consider is
\begin{align*}
&\mbb{E}_{p_\theta(\mbf{t}\mid\mbf{x})}\left[\frac{1}{2}\partial_\nu \ln p_\theta(\mbf{t}\mid\mbf{x}) \partial_\alpha \ln p_\theta(\mbf{t}\mid\mbf{x}) \partial_\beta \ln p_\theta(\mbf{t}\mid\mbf{x})\right]\\
=& \mbb{E}_{p_\theta(\mbf{t}\mid\mbf{x})}\left[\sum_{ijk}\frac{t_it_jt_k}{2y_iy_jy_k}\partial_\nu y_i \partial_\alpha y_j \partial_\beta y_k\right]\\
=& \mbb{E}_{p_\theta(\mbf{t}\mid\mbf{x})}\left[\sum_{i=1}^o\frac{t_i^3}{2y_i^3}\partial_\nu y_i \partial_\alpha y_i \partial_\beta y_i\right]\\
=& \sum_{i=1}^o \frac{1}{2y_i^2} \partial_\nu y_i \partial_\alpha y_i \partial_\beta y_i.
\end{align*}

Combining the above two components, we obtain the Levi-Civita connection
\begin{align*}
\Gamma^\mu_{\alpha\beta} = g^{\mu\nu} \sum_{i=1}^o \mbb{E}_{q(\mbf{x})}\bigg[ \frac{1}{y_i} \partial_\nu y_i \partial_\alpha \partial_\beta y_i - \frac{1}{2y_i^2}\partial_\nu y_i \partial_\alpha y_i \partial_\beta y_i \bigg].
\end{align*}

\section{Computing Connection Products via Backpropagation}\label{app:computation}
It is not tractable to compute $\Gamma^\mu_{\alpha\beta}$ for large neural networks. Fortunately, to evaluate~(\ref{main:geo:eq}) we only need to know $\Gamma^{\mu}_{\alpha\beta} \dot{\gamma}^\alpha \dot{\gamma}^\beta$. For the typical losses in Proposition \ref{prop:l2}, this expression contains two main terms:
\begin{enumerate}
	\item $\sum_{i=1}^o \lambda_i \partial_\nu y_i \partial_\alpha\partial_\beta y_i \dot\gamma^\alpha \dot\gamma^\beta$. Note that $\partial_\alpha \partial_\beta y_i \dot\gamma^\alpha \dot\gamma^\beta$ is the directional second derivative of $y_i$ along the direction of $\dot\gamma$ (it's a scalar).
	\item $\sum_{i=1}^o \lambda_i \partial_\nu y_i \partial_\alpha y_i \partial_\beta y_i \dot\gamma^\alpha \dot\gamma^\beta$. Note that $\partial_\alpha y_i \partial_\beta y_i \dot\gamma^\alpha \dot\gamma^\beta = (\partial_\alpha y_i \dot\gamma^\alpha)^2$ (recall Einstein's notation), where $\partial_\alpha y_i \dot\gamma^\alpha$ is the directional derivative of $y_i$ along the direction of $\dot\gamma$.
\end{enumerate}

After obtaining the directional derivatives of $y_i$ (scalars $\mu_i$), both terms have the form of $\sum_{i=1}^o \lambda_i \mu_i \partial_\nu y_i$. It can be computed via backpropagation with loss function $L = \sum_{i=1}^o \lambda_i \mu_i y_i$ while treating $\lambda_i \mu_i$ as constants.

Inspired by the ``Pearlmutter trick'' for computing Hessian-vector and curvature matrix-vector products~\citep{pearlmutter1994fast,schraudolph2002fast}, we propose a similar method to compute directional derivatives and connections. 

As a first step, we use the following notations. Given an input $\mbf{x}$ and parameters $\theta = (W_1,\cdots,W_l,b_1,\cdots,b_l)$, a feed-forward neural network computes its output $\mbf{y}(\mbf{x},\theta) = a_l$ by the recurrence
\begin{align}
s_i &= W_i a_{i-1} + b_i \label{eqn:nn1}\\
a_i &= \phi_i(s_i)\label{eqn:nn2},
\end{align}
where $W_i$ is the weight matrix, $b_i$ is the bias, and $\phi_i(\cdot)$ is the activation function. Here $a_i$, $b_i$ and $s_i$ are all vectors of appropriate dimensions. The loss function $L(\mbf{t}, \mbf{y})$ measures the distance between the ground-truth label $\mbf{t}$ of $\mbf{x}$ and the network output $\mbf{y}$. For convenience, we also define 
\begin{align*}
\mcal{D}(v) &= \frac{\ud L(\mbf{t},\mbf{y})}{\ud v}\\
\mcal{R}_v (g(\theta)) &= \lim_{\epsilon\rightarrow 0} \frac{1}{\epsilon} [g(\theta+\epsilon v) - g(\theta)]\\
\mcal{S}_v (g(\theta)) &= \lim_{\epsilon\rightarrow 0} \frac{1}{\epsilon^2}[g(\theta+2\epsilon v) - 2g(\theta+\epsilon v) + g(\theta)]\\
&= \mcal{R}_v(\mcal{R}_v(g(\theta))),
\end{align*}
which represents the gradient of $L(\mbf{t},\mbf{y})$, directional derivative of $g(\theta)$ and directional second derivative of $g(\theta)$ along the direction of $v$ respectively.

The following observation is crucial for our calculation:
\begin{proposition}\label{prop:property}
	For any differentiable scalar function $g(\theta), g_1(\theta), g_2(\theta), f(x)$ and vector $v$, we have
	\begin{align*}
	\mcal{R}_v(g_1 + g_2) &= \mcal{R}_v(g_1) + \mcal{R}_v(g_2)\\
	\mcal{R}_v(g_1 g_2) &= \mcal{R}_v (g_1)g_2 + g_1\mcal{R}_v(g_2)\\
	\mcal{R}_v(f(g)) &= f'\mcal{R}_v(g)\\
	\mcal{S}_v(g_1 + g_2) &= \mcal{S}_v(g_1) + \mcal{S}_v(g_2)\\
	\mcal{S}_v(g_1 g_2) &= \mcal{S}_v(g_1)g_2 + 2\mcal{R}_v(g_1)\mcal{R}_v(g_2) + g_1\mcal{S}_v(g_2)\\
	\mcal{S}_v(f(g)) &= f''\mcal{R}_v(g)^2 + f'\mcal{S}_v(g)
	\end{align*}
\end{proposition}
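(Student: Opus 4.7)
The plan is to handle the three $\mcal{R}_v$ identities first and then bootstrap each $\mcal{S}_v$ identity from its $\mcal{R}_v$ counterpart using the already-noted equality $\mcal{S}_v(g) = \mcal{R}_v(\mcal{R}_v(g))$. The key observation that makes everything go through cleanly is that $\mcal{R}_v g(\theta) = \tfrac{d}{d\epsilon} g(\theta + \epsilon v)\big|_{\epsilon=0}$, so $\mcal{R}_v$ reduces to an ordinary one-variable derivative in $\epsilon$ evaluated at $0$. Under this reformulation, the three $\mcal{R}_v$ claims are literally the sum rule, product rule, and chain rule for single-variable differentiation applied to $g(\theta+\epsilon v)$, $g_1(\theta+\epsilon v)g_2(\theta+\epsilon v)$, and $f(g(\theta+\epsilon v))$ respectively. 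I would dispatch each in one line by differentiating the difference quotient and passing to the limit.

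For the $\mcal{S}_v$ identities, I would first justify $\mcal{S}_v = \mcal{R}_v \circ \mcal{R}_v$ by expanding $g(\theta+2\epsilon v)$ and $g(\theta+\epsilon v)$ via Taylor's theorem around $\theta$ in the direction $v$; the second-order central-difference-style formula $[g(\theta+2\epsilon v) - 2g(\theta+\epsilon v) + g(\theta)]/\epsilon^2$ then collapses to $\frac{d^2}{d\epsilon^2} g(\theta+\epsilon v)\big|_{\epsilon=0}$, which is exactly $\mcal{R}_v(\mcal{R}_v(g))$. With this reduction in hand, the linearity of $\mcal{S}_v$ is immediate from applying $\mcal{R}_v$ twice. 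The product rule follows by applying the $\mcal{R}_v$ product rule once to get $\mcal{R}_v(g_1)g_2 + g_1\mcal{R}_v(g_2)$, then applying $\mcal{R}_v$ a second time to that expression and again using the $\mcal{R}_v$ product rule on each of the two terms; collecting like terms produces the cross-term $2\mcal{R}_v(g_1)\mcal{R}_v(g_2)$ together with $\mcal{S}_v(g_1)g_2$ and $g_1\mcal{S}_v(g_2)$. The chain rule is similar: applying $\mcal{R}_v$ to $\mcal{R}_v(f(g)) = f'(g)\mcal{R}_v(g)$ invokes the $\mcal{R}_v$ product rule on the product $f'(g) \cdot \mcal{R}_v(g)$ and the $\mcal{R}_v$ chain rule on $f'(g)$, producing $f''(g)\mcal{R}_v(g)^2 + f'(g)\mcal{S}_v(g)$.

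There is no real obstacle here; the only mildly delicate step is justifying $\mcal{S}_v = \mcal{R}_v \circ \mcal{R}_v$ rigorously from the second-difference definition, which requires a second-order Taylor expansion with an $o(\epsilon^2)$ remainder and hence the assumption that $g$ is $C^2$ in a neighborhood of $\theta$ along the line through $v$. Once that bridge is built, the entire proposition is a mechanical consequence of one-variable calculus and can be written in a few lines per identity.
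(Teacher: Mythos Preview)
Your approach is correct and complete. The paper does not actually supply a proof for this proposition; it is stated as a ``crucial observation'' and then used immediately to derive the recurrent equations for $\mcal{R}_{\dot\gamma}$ and $\mcal{S}_{\dot\gamma}$, without any accompanying argument. Your plan---rewriting $\mcal{R}_v g(\theta)$ as $\frac{d}{d\epsilon}g(\theta+\epsilon v)\big|_{\epsilon=0}$, invoking the single-variable sum/product/chain rules, and then bootstrapping the $\mcal{S}_v$ identities via $\mcal{S}_v = \mcal{R}_v\circ\mcal{R}_v$---is exactly the standard way to verify these Pearlmutter-style identities, and it fills in what the paper leaves implicit. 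The one caveat you already flagged (that justifying $\mcal{S}_v = \mcal{R}_v\circ\mcal{R}_v$ from the second-difference definition needs a $C^2$ assumption and an $o(\epsilon^2)$ Taylor remainder) is the only place any care is required, and you have it covered.
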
 

Using those new notations, the directional derivatives $\partial_\alpha \partial_\beta y_i \dot\gamma^\alpha \dot\gamma^\beta$, $\partial_\alpha y_i \dot\gamma^\alpha$ can be written as $\mcal{S}_{\dot{\gamma}}(y_i)$ and $\mcal{R}_{\dot{\gamma}}(y_i)$. We can obtain recurrent equations for them by applying \propref{prop:property} to \eqref{eqn:nn1} and $\eqref{eqn:nn2}$. The results are
\begin{align*}
\mcal{R}_{\dot{\gamma}}(s_i) &= \mcal{R}_{\dot{\gamma}}(W_i)a_{i-1} + W_i\mcal{R}_{\dot{\gamma}}(a_{i-1}) + \mcal{R}_{\dot{\gamma}}(b_i)\\
\mcal{R}_{\dot{\gamma}}(a_i) &= \phi'(s_i)\odot \mcal{R}_{\dot{\gamma}}(s_i)\\
\mcal{S}_{\dot{\gamma}}(s_i) &= \mcal{S}_{\dot{\gamma}}(W_i)a_{i-1} + 2\mcal{R}_{\dot{\gamma}}(W_i)\mcal{R}_{\dot{\gamma}}(a_{i-1}) \\
&\quad + W_i\mcal{S}_{\dot{\gamma}}(a_{i-1}) + \mcal{S}_{\dot{\gamma}}(b_i)\\
\mcal{S}_{\dot{\gamma}}(a_i) &= \phi''(s_i)\odot \mcal{R}_{\dot{\gamma}}(s_i)^2 + \phi'(s_i)\odot \mcal{S}_{\dot{\gamma}}(s_i),
\end{align*}
which can all be computed during the forward pass, given that $\mcal{R}_{\dot{\gamma}}(W_i) = \dot{\gamma}$ and $\mcal{S}_{\dot{\gamma}}(W_i) = 0$.
Based on the above recurrent rules, we summarize our algorithms for those two terms of connections in Alg.~\ref{alg:term1} and Alg.~\ref{alg:term2}.

\begin{algorithm}
	\caption{Calculating $\sum_{i=1}^o \lambda_i\partial_\nu y_i \partial_{\alpha}\partial_\beta y_i \dot\gamma^\alpha \dot\gamma^\beta$ (term 1)}\label{alg:term1}
	\begin{algorithmic}[1]		
		\Require{$\dot\gamma$. Here we abbreviate $\mcal{R}_{\dot\gamma}$ to $\mcal{R}$ and $\mcal{S}_{\dot\gamma}$ to $\mcal{S}$.}
		\item[]
		\State{$a_0 \gets x$}
		\State{$\mcal{R}(a_0) \gets 0$}
		\State{$\mcal{S}(a_0) \gets 0$}
		\item[]
		\For{$i \gets 1$ to $l$}
		\Comment{forward pass}
		\State{$s_i \gets W_i a_{i-1} + b_i$}
		\State{$a_i \gets \phi_i(s_i)$}
		\State{$\mcal{R}(s_i) \gets \mcal{R}(W_i)a_{i-1} + W_i\mcal{R}(a_{i-1}) + \mcal{R}(b_i)$}
		\State{$\mcal{R}(a_i) \gets \phi'(s_i) \odot \mcal{R}(s_i)$}
		\State{$\mcal{S}(s_i) \gets \mcal{S}(W_i)a_{i-1} + 2\mcal{R}(W_i)\mcal{R}(a_{i-1}) + W_i\mcal{S}(a_{i-1}) + \mcal{S}(b_i)$}
		\State{$\mcal{S}(a_i) \gets \phi''(s_i)\odot \mcal{R}(s_i)^2 + \phi'(s_i)\odot \mcal{S}(s_i)$}
		\EndFor
		\item[]
		\State{Compute $\vec{\lambda}$ from $a_l$}
		\State{$\mcal{D}(a_l) = \vec{\lambda} \mcal{S}(a_l)$}
		\item[]
		\For{$i = l$ to 1}
		\Comment{backward pass}
		\State{$\mcal{D}(s_i) \gets \mcal{D}(a_i) \odot \phi'(s_i)$}
		\State{$\mcal{D}(W_i) \gets \mcal{D}(s_i) a_{i-1}^\intercal$}
		\State{$\mcal{D}(b_i) \gets \mcal{D}(s_i)$}
		\State{$\mcal{D}(a_{i-1}) \gets W_i^\intercal \mcal{D}(s_i)$}
		\EndFor
		\item[]
		\Return{$ (\mcal{D}(W_1),\cdots,\mcal{D}(W_l),\mcal{D}(b_1),\cdots,\mcal{D}(b_l))$.}
	\end{algorithmic}
\end{algorithm}

\begin{algorithm}
	\caption{Calculating $\sum_{i=1}^o \lambda_i \partial_\nu y_i \partial_{\alpha}y_i \partial_\beta y_i \dot\gamma^\alpha \dot\gamma^\beta$ (term 2)}\label{alg:term2}
	\begin{algorithmic}[1]
		\Require{$\dot\gamma$. Here we abbreviate $\mcal{R}_{\dot\gamma}$ to $\mcal{R}$.}
		\item[]
		\State{$a_0 = x$}
		\State{$\mcal{R}(a_0) = 0$}
		\item[]
		\For{$i \gets 1$ to $l$}
		\Comment{forward pass}
		\State{$s_i \gets W_i a_{i-1} + b_i$}
		\State{$a_i \gets \phi(s_i)$}
		\State{$\mcal{R}(s_i) \gets \mcal{R}(W_i)a_{i-1} + W_i\mcal{R}(a_{i-1}) + \mcal{R}(b_i)$}
		\State{$\mcal{R}(a_i) \gets \phi'(s_i) \odot \mcal{R}(s_i)$}
		\EndFor
		\item[]
		\State{Compute $\vec{\lambda}$ from $a_l$}
		\State{$\mcal{D}(a_l) = \vec{\lambda} \mcal{R}(a_l)^2$}
		\item[]
		\For{$i = l$ to 1}
		\Comment{backward pass}
		\State{$\mcal{D}(s_i) \gets \mcal{D}(a_i) \odot \phi'(s_i)$}
		\State{$\mcal{D}(W_i) \gets \mcal{D}(s_i) a_{i-1}^\intercal$}
		\State{$\mcal{D}(b_i) \gets \mcal{D}(s_i)$}
		\State{$\mcal{D}(a_{i-1}) \gets W_i^\intercal \mcal{D}(s_i)$}
		\EndFor
		\item[]
		\Return{$(\mcal{D}(W_1),\cdots,\mcal{D}(W_l),\mcal{D}(b_1),\cdots,\mcal{D}(b_l))$.}
	\end{algorithmic}
\end{algorithm}

\section{Practical Considerations}\label{app:settings}
Practically, the Fisher information matrix could be ill-conditioned for inversion. In experiments, we compute $[g_{\mu\nu} + \epsilon \operatorname{diag}(g_{\mu\nu})]^{-1}\partial_\nu L$ instead of $(g_{\mu\nu})^{-1} \partial_\nu L$, where $\epsilon$ is the damping coefficient and $\operatorname{diag}(g_{\mu\nu})$ is the diagonal part of $g_{\mu\nu}$. When $g_{\mu\nu}$ is too large to be inverted accurately, we use truncated conjugate gradient for solving the corresponding linear system.

Moreover, in line with the pioneering work of~\citet{martens2010deep}, we use backtracking search to adaptively shrink the step size and adopt a Levenberg-Marquardt style heuristic for adaptively choosing the damping coefficient. 

As pointed out in~\citet{ollivier2013riemannian}, there are also two other sources of invariance loss, initialization and damping. Simple random initialization obviously depends on the network architecture. Unfortunately, there is no clear way to make it independent of parameterization. Large damping wipes out small eigenvalue directions and swerves optimization towards na\"{i}ve gradient descent, which is not invariant. When the damping coefficient selected according to the Marquardt heuristic~\citep{marquardt1963algorithm} is very large, it becomes meaningless to use either midpoint integrator or geodesic correction. In the experiments of training deep neural networks, we found it beneficial to set a threshold for the damping coefficient and switch off midpoint integrator or geodesic correction at the early stage of optimization when damping is very large.

\section{Additional Details on Experimental Evaluations}

\subsection{Settings for Deep Neural Network Training}\label{app:exp1}
For the deep network experiments, we use the hyper-parameters in \citet{martens2010deep} as a reference, the modifications are that we fix the maximum number of CG iterations to 50 and maximum number of epochs to 140 for all algorithms and datasets. The initial damping coefficient is 45 across all tasks, and the damping thresholds for CURVES, MNIST and FACES are set to 5, 10 and 0.1 respectively. As mentioned in Appendix~\ref{app:settings}, we use a threshold on the damping to switch on / off our corrections. However, in reality our methods will only be switched off for a small number of iterations in the early stage of training. Note that more careful tuning of these thresholds, \eg, using different thresholds for different acceleration methods, may lead to better results. 

Since both midpoint integrator and geodesic correction are direct modifications of natural gradient method, we incorporate all the improvements in~\citet{martens2010deep}, including sparse initialization, CG iteration backtracking, \etc. 
For determining the learning rate $h\lambda$, we use the default value $h\lambda = 1$ with standard backtracking search. 
To highlight the effectiveness of the algorithmic improvements we introduced, the same set of hyper-parameters and random seed is used across all algorithms on all datasets. 

For deep autoencoders, network structures are the same as in~\citet{hinton2006reducing} and \citet{martens2010deep} and we adopt their training / test partitions and choice of loss functions. For deep classifiers on MNIST, the network structure is 784-1000-500-250-30-10, all with fully connected layers, and as preprocessing, we center and normalize all training and test data.

Deep autoencoders for CURVES and MNIST datasets are trained with binary cross-entropy losses while FACES is trained with squared loss. All results are reported in squared losses. Although there is discrepancy between training and test losses, they align with each other pretty well and thus we followed the setting in ~\cite{martens2010deep}. According to our observation, performance is robust to different random seeds and the learning curves measured by errors on training and test datasets are similar, except that we slightly overfitted FACES dataset.

Our implementation is based on the MATLAB code provided by~\cite{martens2010deep}. However, we used MATLAB Parallel Computing Toolbox for GPU, instead of the Jacket package used in \cite{martens2010deep}, because Jacket is not available anymore. Computation times are not directly comparable as the Parallel Computing Toolbox is considerably slower than Jacket. The programs were run on Titan Xp GPUs.

\subsection{Settings for Model-Free Reinforcement Learning}\label{app:exp2}
We consider common hyperparameter choices for ACKTR as well as our midpoint integrator and geodesic correction methods, where both the policy network and the value network is represented as a two layer fully-connected neural network with 64 neurons in each layer. Specifically, we consider our methods (and subsequent changes to the hyperparameters) only on the policy networks.
We select constant learning rates for each environment since it eliminates the effect of the learning rate schedule in~\citep{wu2017scalable} over sample efficiency. The learning rates are set so that ACKTR achieves the highest episodic reward at 1 million timesteps. We select learning rates of 1.0, 0.03, 0.03, 0.03, 0.3, 0.01 for \texttt{HalfCheetah}, \texttt{Hopper}, \texttt{Reacher}, \texttt{Walker2d}, \texttt{InvertedPendulum}, \texttt{InvertedDoublePendulum} respectively. 
We set momentum to be zero for all methods, since we empirically find that this improves sample efficiency for ACKTR with the fixed learning rate schedule. For example, our ACKTR results for the \texttt{Walker2d} environment is over 1500 for 1 million timesteps, whereas \citep{wu2017scalable} reports no more than 800 for the same number of timesteps (even with the learning rate schedule).

The code is based on OpenAI baselines~\citep{baselines} and connection-vector products are computed with TensorFlow~\citep{abadi2016tensorflow} automatic differentiation.

\section{Experiments on the Small-Curvature Approximation}\label{app:small}

Our geodesic correction is inspired by geodesic acceleration~\citep{transtrum2011geometry}, a method to accelerate the Gauss-Newton algorithm for nonlinear least squares problems. In \cite{transtrum2012geodesic}, geodesic acceleration is derived from a high-order approximation to Hessian under the so-called \emph{small-curvature assumption}. In this section, we demonstrate empirically that the small-curvature approximation generally does not hold for deep neural networks. To this end, we need to generalize the method in~\cite{transtrum2012geodesic} (which is only applicable to square loss) to general losses. 

\subsection{Derivation Based on Perturbation}
It can be shown that Fisher information matrix is equivalent to the Gauss-Newton matrix when the loss function is appropriately chosen. Let's analyze the acceleration terms from this perspective.

Let the loss function be $\mcal{L}(y, f)$ and $z^i (x;\theta)$, $i=1, \cdots, o$ be the top layer values of the neural network. To show the equivalence of Gauss-Newton matrix and Fisher information matrix, we usually require $\mcal{L}$ to also include the final layer activation (non-linearity) applied on $z$~\citep{pascanu2013revisiting,martens2014new}. 
Hence different from $y$, $z$ is usually the value \emph{before} final layer activation function. To obtain the conventional Gauss-Newton update, we analyze the following problem: 

\begin{gather*}
\min_{\delta\theta} \sum_{(x,y) \in S}\mcal{L}(y, z + \partial_j z \delta \theta^j) + \kappa F_{ij} \delta\theta^i \delta\theta^j,
\end{gather*}
where $S$ is the training dataset and $F$ is a metric measuring the distance between two models with parameter difference $\delta\theta$. Note that without loss of generality, we omit $\sum_{(x,y) \in S}$ in the sequel.

By approximating $\mcal{L}(y,\cdot)$ with a second-order Taylor expansion, we obtain
\begin{align*}
\mcal{L}(y,z) + \partial_k \mcal{L}(y,z) \partial_j z^k \delta\theta^j + \frac{1}{2}\partial_m\partial_n \mcal{L}(y,z) \partial_i z^m \partial_j z^n \delta\theta^i \delta\theta^j + \kappa F_{ij} \delta\theta^i \delta\theta^j.
\end{align*}
The normal equations obtained by setting derivatives to $0$ are
\begin{align*}
\left( \partial_m \partial_n \mcal{L} \partial_i z^m \partial_j z^n + 2 \kappa F_{ij} \right) \delta \theta^j = -\partial_k \mcal{L}(y,z) \partial_i z^k,
\end{align*}
which exactly gives the natural gradient update
\begin{align*}
\delta \theta^j_1 = - \left( \partial_m \partial_n \mcal{L} \partial_i z^m \partial_j z^n + \lambda F_{ij} \right)^{-1} \partial_k \mcal{L}(y,z) \partial_i z^k.
\end{align*}
where we fold $2\kappa$ to $\lambda$.
Hence natural gradient is an approximation to the Hessian with linearized model output $z + \partial_j z \delta \theta^j$.

Now let us correct the error of linearizing $z$ using higher order terms, \ie,
\begin{align*}
\mcal{L}\left( y, z + \partial_j z \delta \theta^j + \frac{1}{2} \partial_j \partial_l z \delta \theta^j \delta \theta^l \right) + \lambda F_{ij} \delta\theta^i \delta\theta^j.
\end{align*}
Expanding $\mcal{L}(y,\cdot)$ to second-order gives us
\begin{multline*}
\mcal{L}(y,z) + \partial_k \mcal{L}(y,z)\left(\partial_j z^k \delta\theta^j + \frac{1}{2} \partial_j \partial_l z^k \delta\theta^j \delta\theta^l\right) +\\ \frac{1}{2} \partial_m \partial_n \mcal{L}(y,z) \left(\partial_j z^m \delta\theta^j + \frac{1}{2} \partial_j \partial_l z^m \delta\theta^j \delta\theta^l\right)\left(\partial_k z^n \delta \theta^k + \frac{1}{2} \partial_k \partial_p z^n \delta \theta^k \delta \theta^p\right) + \lambda F_{ij} \delta\theta^i \delta \theta^j.
\end{multline*}

The normal equations are
\begin{multline*}
\partial_k \mcal{L}(y,z) \partial_\mu z^k + (\partial_k \mcal{L}(y,z) \partial_\mu\partial_j z^k + \partial_m \partial_n \mcal{L}(y,z) \partial_\mu z^m \partial_j z^n + \lambda F_{\mu j} ) \delta\theta^j + \\ \left( \partial_m \partial_n \mcal{L}(y,z) \partial_\mu \partial_j z^m \partial_k z^n + \frac{1}{2} \partial_m \partial_n \mcal{L}(y,z) \partial_j \partial_k z^m \partial_\mu z^n \right)\delta\theta^j \delta\theta^k = 0.
\end{multline*}

Let $\delta \theta = \delta \theta_1 + \delta \theta_2$ and assume $\delta\theta_2$ to be small. Dropping in $\delta\theta_1$ will turn the normal equation to
\begin{multline*}
(\partial_k \mcal{L}(y,z) \partial_i \partial_j z^k + \partial_m \partial_n \mcal{L}(y,z) \partial_i z^m \partial_j z^n + \lambda F_{ij}) \delta \theta_2^j + \partial_k \mcal{L}(y,z) \partial_i \partial_j z^k \delta\theta_1^j + \\\left( \partial_m \partial_n \mcal{L}(y,z) \partial_\mu \partial_j z^m \partial_k z^n + \frac{1}{2} \partial_m \partial_n \mcal{L}(y,z) \partial_j \partial_k z^m \partial_\mu z^n \right)\delta \theta_1^j \delta \theta_1^k = 0. 
\end{multline*}

The approximation for generalized Gauss-Newton matrix is $\partial_k \mcal{L}(y,z) \partial_i\partial_j z^k = 0$. After applying it to $\delta \theta_2$, we have
\begin{multline}
	\delta \theta_2^\mu = -(\partial_m \partial_n \mcal{L}(y,z) \partial_i z^m \partial_j z^n + \lambda F_{ij})^{-1} \\\bigg[\left( \partial_m \partial_n \mcal{L}(y,z) \partial_\mu \partial_j z^m \partial_k z^n + \frac{1}{2} \partial_m \partial_n \mcal{L}(y,z) \partial_j \partial_k z^m \partial_\mu z^n \right)\delta \theta_1^j \delta \theta_1^k + \partial_k \mcal{L}(y,z) \partial_\mu \partial_j z^k \delta \theta_1^j\bigg].\label{eqn:perturb}
\end{multline}

If we combine $\partial_k \mcal{L}(y,z) \partial_\mu \partial_j z^k \delta \theta_1^j$ and $[\partial_m \partial_n \mcal{L}(y,z) \partial_\mu \partial_j z^m \partial_k z^n]\delta \theta_1^j \delta \theta_1^k$ and use the following \emph{small-curvature} approximation~\citep{transtrum2012geodesic}
\begin{equation*}
	\partial_l \mcal{L} [\delta_{ml} -  (\partial_\alpha \partial_\beta \mcal{L} \partial_i f^\alpha \partial_k f^\beta)^{-1} \partial_i f^l \partial_m \partial_n \mcal{L} \partial_k z^n]\partial_\mu \partial_j z^m = 0,
\end{equation*}
we will obtain an expression of $\delta \theta_2$ which is closely related to the geodesic correction term.
\begin{equation}
	\delta \theta_2^\mu =  -\frac{1}{2} (\partial_m \partial_n \mcal{L}(y,z) \partial_i z^m \partial_j z^n + \lambda F_{ij})^{-1} \left( \partial_m \partial_n \mcal{L}(y,z) \partial_j \partial_k z^m \partial_\mu z^n \right)\delta \theta_1^j \delta \theta_1^k.\label{eqn:smallcurve}
\end{equation}

The final update rule is
\begin{align*}
\delta\theta = \delta \theta_1 + \delta \theta_2.
\end{align*}

Whenever the loss function satisfies $\mcal{L}(y,z) = -\log r(y\mid z) = - z^\intercal T(y) + \log Z(z)$ we have $\mbf{F}_\theta = \sum_{(x,y) \in S} \partial_m \partial_n \mcal{L}(y,z) \partial_i \partial_j z^m z^n$, \ie, Gauss-Newton method coincides with natural gradient and the Fisher information matrix is the Gauss-Newton matrix. 

Here are the formulas for squared loss and binary cross-entropy loss, where we follow the notation in the main text and denote $y$ as the final network output \emph{after} activation.
\begin{proposition}
For linear activation function and squared loss, we have the following formulas
\begin{align*}
\partial_m \partial_n \mcal{L}(t,f) \partial_\mu \partial_j z^m \partial_k z^n \delta \theta_1^j \delta \theta_1^k &= \frac{1}{\sigma^2} \partial_\mu \partial_j y^m \partial_k y^m \delta \theta_1^j \delta \theta_1^k\\
\frac{1}{2} \partial_m \partial_n \mcal{L}(t,f) \partial_j \partial_k z^m \partial_\mu z^n \delta\theta_1^j \delta\theta_1^k &= \frac{1}{2\sigma^2} \partial_j \partial_k y^m \partial_\mu y^m \delta \theta_1^j \delta \theta_1^k\\
\partial_k \mcal{L}(t,f) \partial_\mu \partial_j z^k \delta \theta_1^j &= \frac{1}{\sigma^2} (y_k - t_k) \partial_\mu \partial_j y^k \delta\theta_1^j.
\end{align*}
In this case, \eqref{eqn:smallcurve} is equivalent to $-\frac{1}{2} \Gamma_{jk}^\mu \delta\theta_1^j \delta\theta_1^k$, which is our geodesic correction term for squared loss.	
\end{proposition}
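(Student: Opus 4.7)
The plan is to verify the proposition by direct computation: because the activation is linear, $y^m = z^m$, so every derivative of $z$ equals the corresponding derivative of $y$. I will first write the squared loss explicitly as $\mcal{L}(t,f) = \frac{1}{2\sigma^2}\sum_i (t_i - y_i)^2$, from which the first and second derivatives with respect to the outputs are immediate: $\partial_k \mcal{L} = \frac{1}{\sigma^2}(y_k - t_k)$ and $\partial_m \partial_n \mcal{L} = \frac{1}{\sigma^2}\delta_{mn}$. Substituting these into each of the three tensor contractions on the left-hand sides and using $z = y$ then gives the three claimed equalities by one line of index manipulation per formula (the Kronecker delta collapses the $m,n$ sums onto a single output index).

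For the second claim, I would plug the identity $\partial_m \partial_n \mcal{L} = \frac{1}{\sigma^2}\delta_{mn}$ into \eqref{eqn:smallcurve}. The quadratic form appearing inside the inverse becomes $\frac{1}{\sigma^2}\partial_i y^m \partial_j y^m$, which is exactly the Fisher information metric $g_{ij}$ for the squared loss as recorded in Proposition 2 (the damping $\lambda F_{ij}$ plays the role of the practical regularizer noted in Appendix \ref{app:settings} and can be dropped or absorbed into the inversion without affecting the algebraic identity). The higher-order term in the parentheses of \eqref{eqn:smallcurve} reduces by the second of the three formulas to $\frac{1}{\sigma^2}\partial_j \partial_k y^m \partial_\mu y^m$. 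Therefore \eqref{eqn:smallcurve} simplifies to $-\tfrac{1}{2}\, g^{\mu\nu} \cdot \tfrac{1}{\sigma^2}\partial_\nu y^m \partial_j\partial_k y^m \,\delta\theta_1^j \delta\theta_1^k$.

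Finally, I would invoke Proposition 2 directly: for squared loss it gives $\Gamma^\mu_{jk} = \frac{1}{\sigma^2} g^{\mu\nu} \mbb{E}_{q}[\partial_\nu y^m \partial_j \partial_k y^m]$. Matching this against the simplified expression for $\delta\theta_2^\mu$ yields $\delta\theta_2^\mu = -\tfrac{1}{2}\Gamma^\mu_{jk}\delta\theta_1^j \delta\theta_1^k$, as claimed. The only subtlety worth remarking on is the mild abuse of notation between the empirical sum over the training set $S$ (appearing in the perturbation derivation) and the expectation under $q(\mbf{x})$ (appearing in Proposition 2); these coincide when $q$ is taken to be the empirical distribution, which is exactly the standard setting discussed in the paper.

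I do not anticipate a real obstacle here — the argument is a bookkeeping exercise rather than a deep one. The only mildly delicate point is ensuring that the indices $(\mu, j, k)$ in the Gauss–Newton inversion line up correctly after the $\delta_{mn}$ collapse, so that what sits inside the inverse really is the Fisher metric and not some transposed or misaligned variant; once that identification is made, the comparison with $\Gamma^\mu_{jk}$ from Proposition 2 is immediate.
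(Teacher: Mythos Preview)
Your proposal is correct and is exactly the kind of direct computation the paper has in mind; the proposition in Appendix~\ref{app:small} is stated without an explicit proof, but your argument (compute $\partial_k\mcal{L}=\frac{1}{\sigma^2}(y_k-t_k)$, $\partial_m\partial_n\mcal{L}=\frac{1}{\sigma^2}\delta_{mn}$, use $z=y$ for the linear activation, collapse the Kronecker delta, then identify the Gauss--Newton matrix with $g_{ij}$ and the bracketed term in \eqref{eqn:smallcurve} with the connection from Proposition~\ref{prop:l2}) is precisely the intended bookkeeping. Your remarks about the damping term and the empirical-vs.-expectation identification are also in line with how the paper treats these quantities elsewhere.
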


\begin{proposition}
For sigmoid activation function and binary cross-entropy loss, we have the following terms
\begin{gather*}		
y_i := \operatorname{sigmoid}(z) := \frac{1}{1 + e^{-z^i}}\\
\partial_m \partial_n \mcal{L}(t,f) = \delta_{mn} y_m (1-y_m) \\
\partial_m \partial_n \mcal{L}(t,f) \partial_\mu \partial_j z^m \partial_k z^n \delta \theta_1^j \delta \theta_1^k = \bigg[ \frac{1}{y_m(1-y_m)} \partial_\mu\partial_j y_m \partial_k y_m + \\
\quad \quad\quad\quad\frac{2y_m - 1}{y_m^2(1-y_m)^2} \partial_\mu y_m \partial_j y_m \partial_k y_m \bigg] \delta \theta_1^j \delta \theta_1^k\\
\frac{1}{2} \partial_m \partial_n \mcal{L}(t,f) \partial_j \partial_k z^m \partial_\mu z^n \delta\theta_1^j \delta\theta_1^k = \frac{1}{2}\bigg[ \frac{1}{y_m(1-y_m)} \partial_j \partial_k y_m \partial_\mu y_m \\
\quad \quad\quad\quad + \frac{2y_m -1}{y_m^2(1-y_m)^2}\partial_j y_m \partial_k y_m \partial_\mu y_m \bigg] \delta \theta_1^j \delta \theta_1^k\\
\partial_k \mcal{L}(t,f) \partial_\mu \partial_j z^k \delta \theta_1^j = \bigg[ \frac{1}{y_k(1-y_k)}\partial_\mu \partial_j y_k + \frac{2y_k -1}{y_k^2(1-y_k)^2} \partial_\mu y_k \partial_j y_k\bigg] ( y_k - t_k )\delta \theta_1^j
\end{gather*}
In this case, \eqref{eqn:smallcurve} will give a similar result as geodesic correction, which is $-\frac{1}{2}\Gamma_{jk}^{(1)\mu} \delta\theta_1^j \delta\theta_1^k$. The only difference is using 1-connection $\Gamma_{jk}^{(1)\mu}$~\citep{amari1987differential} instead of Levi-Civita connection $\Gamma_{jk}^\mu$.	
\end{proposition}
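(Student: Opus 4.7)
The proof is a direct computation resting on two elementary identities for the sigmoid plus BCE pairing, followed by a chain-rule conversion from $z$-derivatives to $y$-derivatives. First I would establish the single-sample score identity $\partial_k\mcal{L} = y_k - t_k$ by differentiating $-t\log y - (1-t)\log(1-y)$ with $y=\sigma(z)$ and using $\sigma'(z) = y(1-y)$. Differentiating once more with respect to $z_n$, together with $\partial y_m/\partial z_n = \delta_{mn}\, y_m(1-y_m)$ (since the sigmoid acts componentwise), yields the diagonal-Hessian identity $\partial_m\partial_n\mcal{L} = \delta_{mn}\, y_m(1-y_m)$, which is the second bulleted claim.

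Next I would derive the chain-rule bridge between $z$- and $y$-derivatives. Because each $y_m$ depends only on $z_m$, $\partial_\mu z^m = \partial_\mu y^m / [y_m(1-y_m)]$. A second $\theta$-differentiation, combined with the elementary identity $\frac{d}{dy}\!\left[\tfrac{1}{y(1-y)}\right] = \tfrac{2y-1}{y^2(1-y)^2}$, gives
\begin{equation*}
\partial_\mu \partial_j z^m = \frac{1}{y_m(1-y_m)}\,\partial_\mu\partial_j y^m + \frac{2y_m-1}{y_m^2(1-y_m)^2}\,\partial_\mu y^m \,\partial_j y^m.
\end{equation*}
Substituting this together with the two Hessian/score identities into the third, fourth and fifth claimed expressions is then routine bookkeeping: the diagonal Hessian collapses the double sum over $m,n$ to a single $m$-index, the factor $y_m(1-y_m)$ in $\partial_m\partial_n\mcal{L}$ cancels one copy of $1/[y_m(1-y_m)]$ from $\partial_k z^n$ (or $\partial_\mu z^n$), and the two terms of $\partial_\mu\partial_j z^m$ expand into the claimed two-summand decomposition. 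The gradient-times-Hessian formula is obtained analogously by combining $\partial_k\mcal{L}=y_k-t_k$ with the $\partial_\mu\partial_j z^k$ expansion.

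For the final claim that (33) produces $-\tfrac{1}{2}\Gamma^{(1)\mu}_{jk}\delta\theta_1^j\delta\theta_1^k$, I would appeal to Amari's $\alpha$-connection calculus: the exponential ($\alpha = 1$) connection is defined by $\Gamma^{(1)}_{jk,\nu} = \mbb{E}[\partial_j\partial_k\ell\,\partial_\nu\ell]$, whereas the Levi-Civita connection additionally includes the symmetrization term $\tfrac{1}{2}\mbb{E}[\partial_j\ell\,\partial_k\ell\,\partial_\nu\ell]$. Consulting the BCE formula in Proposition~2, the Levi-Civita connection's cubic-in-$\partial y$ term carries the coefficient $\tfrac{2y_i-1}{2y_i^2(1-y_i)^2}$; removing the $\tfrac{1}{2}$ therefore produces
\begin{equation*}
\Gamma^{(1)\mu}_{jk} = g^{\mu\nu}\sum_{i=1}^{o} \mbb{E}_{q(\mbf{x})}\!\left[\frac{1}{y_i(1-y_i)}\partial_\nu y_i\,\partial_j\partial_k y_i + \frac{2y_i-1}{y_i^2(1-y_i)^2}\partial_\nu y_i\,\partial_j y_i\,\partial_k y_i\right].
\end{equation*}
Substituting the fourth displayed identity into (33) (ignoring the $\lambda F_{ij}$ damping, as in the adjacent squared-loss case), raising $\nu\to\mu$ with $g^{\mu\nu}$, and averaging over $q(\mbf{x})$ then reproduces exactly $-\tfrac{1}{2}\Gamma^{(1)\mu}_{jk}\delta\theta_1^j\delta\theta_1^k$.

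The calculations themselves are essentially exercises in chain rule; the main care is in managing the convention-sensitive prefactors. Concretely, the obstacle is (a) maintaining the $\sigma'=y(1-y)$ identity consistently so that the Hessian stays diagonal and one factor $y_m(1-y_m)$ cleanly cancels at each step, and (b) lining up Amari's $\alpha$-connection conventions so that the \emph{only} discrepancy between the Levi-Civita and $1$-connection is the factor of two on the cubic-in-score term. A minor bookkeeping subtlety is that the four bracketed identities in the statement are per-sample quantities, while the final correspondence with $\Gamma^{(1)\mu}_{jk}$ implicitly invokes the data-average $\mbb{E}_{q(\mbf{x})}$ that is carried throughout Appendix~\ref{app:small}.
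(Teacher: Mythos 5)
Your proof is correct and follows the same direct-computation route the paper implicitly relies on (the paper states this proposition without an explicit proof): the identities $\partial_k\mcal{L}=y_k-t_k$ and $\partial_m\partial_n\mcal{L}=\delta_{mn}y_m(1-y_m)$, the chain-rule conversion $\partial_\mu z^m=\partial_\mu y^m/[y_m(1-y_m)]$ with $\frac{d}{dy}[1/(y(1-y))]=(2y-1)/(y^2(1-y)^2)$, and the identification of the resulting cubic-term coefficient $(2y-1)/(y^2(1-y)^2)$ with Amari's exponential connection (versus the Levi--Civita coefficient $(2y-1)/(2y^2(1-y)^2)$ from Proposition 2) all check out. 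Your closing remark about the suppressed $\mbb{E}_{q(\mbf{x})}$ average correctly accounts for the paper's convention of omitting the sum over the training set in this appendix.
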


We also need an additional algorithm to calculate $\lambda_i \partial_\nu \partial_\alpha y_i \partial_\beta y_i \dot \theta^\alpha \dot \theta^\beta$, as provided by Alg.~\ref{alg:term3}.

\begin{algorithm}
	\caption{An algorithm for computing $\lambda_i \partial_\nu \partial_\alpha y_i \partial_\beta y_i \dot \theta^\alpha \dot \theta^\beta$ (term 3)}\label{alg:term3}
	\begin{algorithmic}[1]
		\Require{$\dot{\theta}$. We abbreviate $\mcal{R}_{\dot \theta}$ to $\mcal{R}$.}
		\item[]
		\State{$\mcal{R}(a_0) \gets 0$}
		\Comment{Since $a_0$ is not a function of the parameters}
		\item[]
		\For{$i\gets$ 1 to $l$}
		\Comment{forward pass}
		\State{$\mcal{R}(s_i) \gets \mcal{R}(W_i) a_{i-1} + W_i \mcal{R}(a_{i-1}) + \mcal{R}(b_i)$}
		\Comment{product rule}
		\State{$\mcal{R}(a_i) \gets \mcal{R}(s_i) \phi_i'(s_i)$}		
		\Comment{chain rule}
		\EndFor
		\item[]\Comment{By here we have computed all $\partial_\beta y_i \dot{\theta}^\beta$}
		\State{Compute $\vec{\lambda}$ from $a_l$}
		\State{
			Let $L = \sum_{i=1}^o \lambda_i \mcal{R}(a_l^i) y_i$
		}
		\State{$\mcal{R}(\mcal{D}(a_l)) \gets \mcal{R}\left( \frac{\partial L}{\partial \mbf{y}}\bigg|_{\mbf{y}=a_l} \right)=\frac{\partial^2 L}{\partial \mbf{y}^2}\bigg|_{\mbf{y}=a_l} \mcal{R}(a_l) = 0$}
		\State{$\mcal{D}a_l \gets \frac{\partial L}{\partial \mbf{y}}\bigg|_{\mbf{y}=a_l} = (\lambda_1(\mcal{R}a_l^1), \cdots, \lambda_o(\mcal{R}a_l^o))$}		
		\item[]
		\For{$i \gets$ $l$ to 1}
		\State{$\mcal{D}(s_i) \gets \mcal{D}(a_i) \odot \phi_i'(s_i)$}
		\State{$\mcal{D}(W_i) \gets \mcal{D}(s_i) a_{i-1}^\intercal$}
		\State{$\mcal{D}(b_i) \gets \mcal{D}(s_i)$}
		\State{$\mcal{D}(a_{i-1})\gets W_i^\intercal \mcal{D}(s_i)$}		
		\State{$
			\mcal{R}(\mcal{D}(s_i)) \gets \mcal{R}(\mcal{D}(a_i)) \odot \phi_i'(s_i) + \mcal{D}(a_i) \odot \mcal{R}(\phi_i'(s_i))
			=\mcal{R}(\mcal{D}(a_i))\odot \phi_i'(s_i) + \mcal{D}(a_i) \odot \phi_i''(s_i) \odot \mcal{R}(s_i)
			$}
		\State{$\mcal{R}(\mcal{D}(W_i)) \gets \mcal{R}(\mcal{D}(s_i)) a_{i-1}^\intercal + \mcal{D}(s_i) \mcal{R}(a_{i-1}^\intercal)$}
		\State{$\mcal{R}(\mcal{D}(b_i)) \gets \mcal{R}(\mcal{D}(s_i))$}
		\State{$\mcal{R}(\mcal{D}(a_{i-1})) \gets \mcal{R} (W_i^\intercal) \mcal{D}(s_i) + W_i^\intercal \mcal{R}(\mcal{D}(s_i))$}
		\EndFor
		\item[]
		\Return{$\lambda_i \partial_\nu \partial_\alpha y_i \partial_\beta y_i \dot \theta^\alpha \dot \theta^\beta = (\mcal{R}(\mcal{D}(W_i)),\cdots,\mcal{R}(\mcal{D}(W_l)), \mcal{R}(\mcal{D}(b_1)),\cdots,\mcal{R}(\mcal{D}(b_l)))$}
	\end{algorithmic}
\end{algorithm}

\subsection{Empirical Results}
If the small curvature approximation holds, \eqref{eqn:perturb} should perform similarly as \eqref{eqn:smallcurve}. The power of geodesic correction can thus be viewed as a higher-order approximation to the Hessian than natural gradient / Gauss-Newton and the interpretation of preserving higher-order invariance would be doubtful. 

In order to verify the small curvature approximation, we use both \eqref{eqn:perturb} (named ``\textbf{perturb}'') and \eqref{eqn:smallcurve} (named ``\textbf{geodesic}'') for correcting the natural gradient update. The difference between their performance shows how well small curvature approximation holds. Using the same settings in main text, we obtain the results on CURVES, MNIST and FACES. 

\begin{figure}
	\centering
	\includegraphics[width=0.6\textwidth]{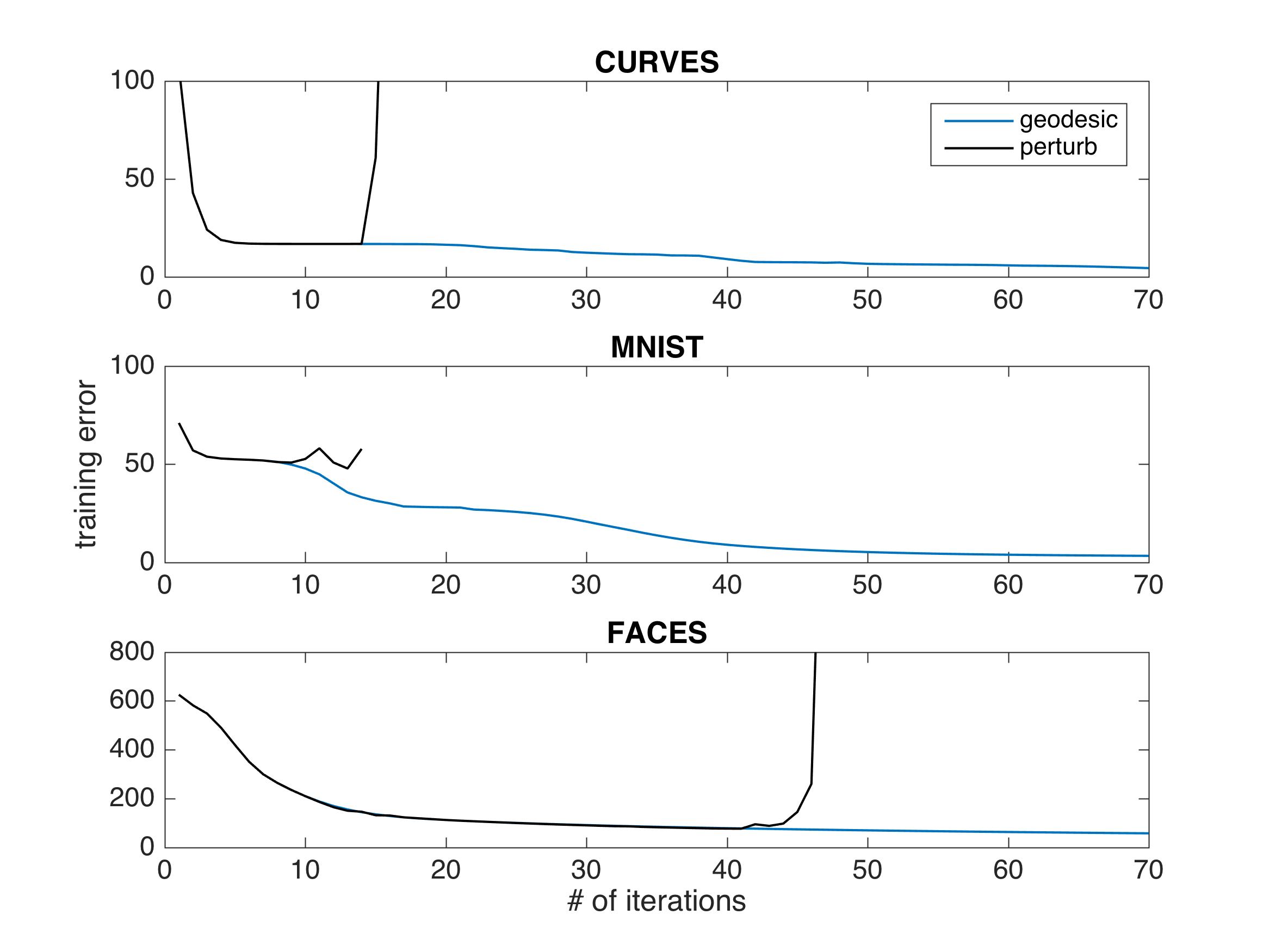}
		\caption{Study of small curvature approximation on different datasets. }\label{fig:perturb}
\end{figure}

As shown in \figref{fig:perturb}, \eqref{eqn:perturb} never works well except for the beginning. This is anticipated since Newton's method is susceptible to negative curvature and will blow up at saddle points~\citep{dauphin2014identifying}. Therefore the direction of approximating Newton's method more accurately is not reasonable. The close match of \eqref{eqn:perturb} and \eqref{eqn:smallcurve} indicates that the small curvature assumption indeed holds temporarily, and Newton's method is very close to natural gradient at the beginning. However, the latter divergence of \eqref{eqn:perturb} demonstrates that the effectiveness of geodesic correction does not come from approximating Newton's method to a higher order.

\end{document}